\documentclass[letterpaper, 10 pt, conference]{ieeeconf}  

\IEEEoverridecommandlockouts                              
\usepackage{graphics} 
\usepackage{epsfig} 
\usepackage{times} 
\usepackage{amsmath}
\usepackage{amssymb}  
\usepackage{algorithm}
\usepackage{subcaption}
\usepackage{textcomp}
\usepackage{xcolor}
\usepackage{multirow}
\let\labelindent\relax
\usepackage{enumitem}
\usepackage{float}
\usepackage{cite}
\usepackage{booktabs}
\usepackage{stackengine}
\usepackage{etoolbox}
\usepackage{mathtools}
\usepackage{optidef}

\newtheorem{theorem}{Theorem}

\title{\LARGE \bf
Geometric Look-Angle Shaping Strategy for Enclosed Inspection
}

\author{Amit Shivam$^1$, Manuel C.R.M. Fernandes$^1$, Sérgio Vinha$^1$, and Fernando A.C.C. Fontes$^1$
\thanks{*This work is financially supported by Project 2022.02801.PTDC-UPWIND-ATOL (https://doi.org/10.54499/2022.02801.PTDC), as well as by grants 2021.06313.BD and 2021.07346.BD, and by the SYSTEC – Research Centre for Systems and Technologies (UID/00147) and the Associate Laboratory ARISE – Advanced Production and Intelligent Systems (LA/P/0112/2020, DOI: 10.54499/LA/P/0112/2020), all funded by Fundação para a Ciência e a Tecnologia, I.P./MECI through national funds.}
\thanks{$^1$Amit Shivam, Manuel C.R.M. Fernandes, Sérgio Vinha and Fernando A.C.C. Fontes are with SYSTEC-ISR ARISE, Faculdade de Engenharia, Universidade do Porto, Rua Dr. Roberto Frias, 4200-465 Porto, Portugal, Emails:
		{\tt\small amit@upwind.pt}, {\tt\small svinha@fe.up.pt}, {\tt\small mcrmf@fe.up.pt} 
        and {\tt\small faf@fe.up.pt}}
}

\begin{document}

\maketitle
\thispagestyle{empty}
\pagestyle{empty}

\begin{abstract}


This paper introduces inspection through GLASS, a Geometric Look-Angle Shaping Strategy for enclosed regions using unmanned aerial vehicles. In doing so, the vehicle’s guidance command is constructed through a bounded, geometry-consistent shaping of the look angle relative to a desired standoff path. By embedding a smooth, hyperbolic-tangent-type shaping function within a polar geometric framework, GLASS ensures global existence of the guidance dynamics. It avoids the far-field limitations inherent to conventional formulations.
Lyapunov stability analysis establishes asymptotic convergence to a prescribed inspection standoff under explicit curvature feasibility conditions, along with analytical settling-time characteristics. The proposed strategy incorporates maximum turn-rate constraints without inducing singularities throughout the workspace.
High-fidelity six-degree-of-freedom quadrotor simulations demonstrate the effectiveness of GLASS in representative enclosed inspection scenarios, highlighting a practically viable guidance framework for autonomous enclosed inspection missions.
\end{abstract}
 \section{Introduction}
Applications of unmanned aerial vehicles (UAVs) in persistent surveillance, reconnaissance, and cooperative monitoring missions have witnessed significant growth over the past decade. In such missions, a UAV is often required to regulate its motion relative to a target so as to maintain a prescribed radial separation, commonly referred to as the stand-off distance, while continuously circling the target. This task demands precise control of the relative geometry between the UAV and the target, particularly under the kinematic and curvature constraints inherent to fixed-wing platforms. Moreover, practical mission requirements impose time-critical performance objectives, necessitating guidance strategies that are computationally efficient, structurally simple, and capable of providing deterministic convergence guarantees. Designing such guidance laws—capable of ensuring reliable orbit acquisition within a bounded time while respecting vehicle turn-rate limitations—remains a fundamental challenge in autonomous standoff tracking.  

In this regard, several methods have been proposed in the literature, which can be broadly categorized as virtual-target-based methods \cite{park2004,ratnoo2015path}, relative side-bearing angle pursuit \cite{park2016circling}, bifurcation-based approaches \cite{anjaly2020target,srinivasu2022standoff,RaviScitech2025,RaviACC2025}, and vector-field-based guidance \cite{frew2008coordinated,lim2013standoff,pothen2017curvature,harinarayana2022vector,amitRatnoo2021,shivamRatnoo2023,amit2022vector,shivamLC2024}. Park et al.\cite{park2004} employed pursuit guidance logic to steer the UAV toward a moving reference target defined on the desired orbit. Ratnoo at al. \cite{ratnoo2015path} presents
trajectory shaping guidance law for following curved paths while overcoming the limitations
of guidance logic used in\cite{park2004}. Although simple in structure, their convergence properties depend on virtual-target placement and may degrade under large initial separation. Ref. \cite{park2016circling} regulates angular geometry directly and ensures circumnavigation along the desired orbit. However, the resulting transients are highly gain-sensitive and can induce large turn-rate demand during far-field capture. 
Bifurcation-based approaches have also emerged as a powerful framework for standoff tracking and inspection missions. Anjaly and Ratnoo\cite{anjaly2020target} employed pitchfork bifurcation in UAV look-angle dynamics to achieve controlled circumnavigation and target prosecution, wherein appropriate selection of the bifurcation parameter enables the desired motion primitive. Subsequently, a transcritical bifurcation in line-of-sight distance dynamics was introduced to achieve stable standoff tracking with tunable convergence behavior\cite{srinivasu2022standoff}. More recently, saddle-node infinite-period bifurcation has been utilized to guide quadrotors toward prescribed viewpoints on circular\cite{RaviScitech2025} and elliptic\cite{RaviACC2025} inspection boundaries, where angular convergence is shaped through a single bifurcation parameter while radial convergence is regulated independently. These methods fundamentally reshape the phase portrait of the engagement dynamics via a tunable parameter, enabling structured transitions between motion objectives. Nevertheless, the convergence rate and performance guarantees remain implicit in the parameters involved in the design criteria.

Vector-field-based methods provide a globally defined heading direction at every point in the workspace and admit Lyapunov-based stability guarantees.
Frew et al \cite{frew2008coordinated} developed classical Lyapunov vector fields that combine contraction and circulation components to ensure asymptotic convergence. Lim et al. \cite{lim2013standoff} introduced an iteratively computed guidance parameter to achieve simultaneous arrival of multiple UAVs on the circular orbit. Further, Pothen and Ratnoo \cite{pothen2017curvature} reshape the circulation component to modify the Lyapunov vector field method, resulting in faster settling time characteristics with lesser maximum curvature in comparison to \cite{frew2008coordinated}. Harinarayana et al.\cite{harinarayana2022vector} further extended vector-field constructions to improve transient behavior under curvature constraints by modifying the tangential and normal component synthesis.  Shivam and Ratnoo\cite{amitRatnoo2021,shivamRatnoo2023} employed an arcsine-based shaping function to construct a commanded course profile that defines a vector field over the plane, enabling global convergence to the desired circular standoff path.  This framework was subsequently extended to elliptic path-following \cite{amit2022vector} scenarios through a nonlinear parametrization of concentric level sets, allowing vector-field construction around general elliptic boundaries with Lyapunov-based stability guarantees. More recently, Lamé-curve-based vector-field guidance\cite{shivamLC2024} was introduced for rectangular boundary surveillance, where a continuous-curvature circumscribing path is first optimized under vehicle turn-rate constraints, and an arcsine-shaped heading command is then used to ensure asymptotic convergence to the Lamé path. While these vector-field approaches provide globally defined heading commands and analytical stability guarantees, the resulting engagement dynamics remain tightly coupled to the specific shaping function and geometric embedding of the desired path, often leading to implicit trade-offs between radial convergence speed, curvature demand, and transient turn-rate peaks. 

Recently, arcsine-based look-angle shaping\cite{jha2024standoff} for standoff tracking 
introduced a normalized error formulation that yields a closed-form settling-time expression with explicit gain bounds. The framework provides deterministic convergence guarantees and reduced control effort relative to bifurcation-based approaches. However, the admissible guidance gain is intrinsically coupled to the initial normalized error to preserve real-valued look-angle dynamics. As the initial separation increases, the allowable gain shrinks, thereby limiting achievable radial contraction rate and slowing far-field capture under strict turn-rate constraints. Consequently, the convergence speed becomes fundamentally constrained by initial conditions. This clearly indicates a structural limitation common to gain-amplified shaping strategies: faster convergence is achieved only through gain escalation, which is bounded by feasibility constraints embedded in the look-angle dynamics.

Motivated by this gap, the present work develops a geometry-driven look-angle shaping framework in which the approach geometry itself is treated as the primary design variable. Instead of amplifying the error through gain scaling, the proposed method prescribes a bounded nonlinear evolution of the look angle that intrinsically regulates radial contraction and curvature demand. Constructed in polar coordinates with explicit turn-rate feasibility embedded in the design, the resulting closed-loop dynamics admit global well-posedness and analytically tractable settling-time characterization under radial-dominant conditions. This enables fast far-field capture without sacrificing curvature feasibility, thereby enlarging the admissible operational envelope for practical standoff missions.

\section{Problem Formulation}

Consider a problem scenario as shown in Fig. \ref{fig: Problem scenario}, where a UAV is required to inspect a planar enclosed region from a standoff distance $r_d$. The centre of the region is assumed to be located at the origin of an inertial frame.
\begin{figure}[ht]
    \centering
    \includegraphics[width=0.75\linewidth]{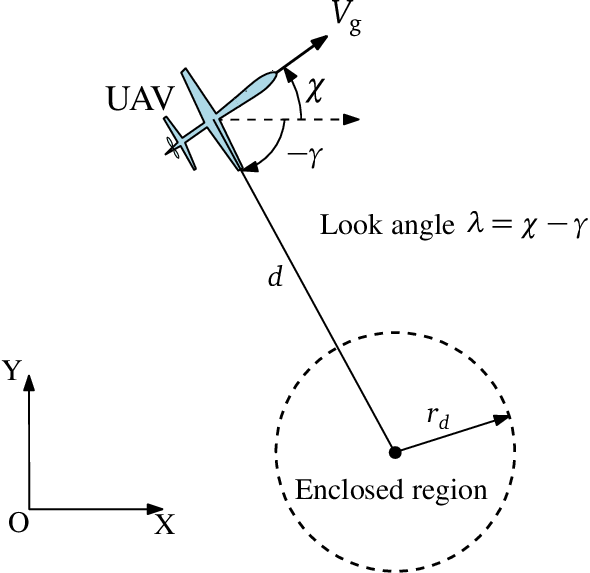}
    \caption{Problem scenario}
    \label{fig: Problem scenario}
\end{figure}
     For a planar inspection geometry, $d$ and $\gamma$ denote the range and the line-of-sight angle (LOS) on the plane of the UAV's trajectory relative to the origin.
The look angle $\lambda$ is the difference between the instantaneous course and the line-of-sight angle, which is expressed as
\begin{equation}
\lambda = \chi - \gamma.
\  label{eq: look angle def}
\end{equation}
The relative engagement kinematics of the UAV in polar coordinates are described by
\begin{align}
    \begin{split}
   \dot d &= V_g \cos\lambda \\
\dot\gamma &= \frac{V_g}{d} \sin\lambda.
\label{eq:polar_kinematics}     
    \end{split}
\end{align}
Let the desired standoff path be specified in polar form as a smooth closed curve
\begin{equation}
d = r(\gamma),
\end{equation}
where $r(\gamma)$ corresponds to a circle, ellipse, or Lam\'e (superellipse) curve.
The objective of this work is to design a guidance law for $\chi$ such that the standoff error reduces to zero and satisfies
\begin{enumerate}
    \item The settling time to a prescribed tube standoff error $e = d - r(\gamma)$ such that $|e|\le \varepsilon$ is analytically tractable.
    \item The commanded course rate satisfies the actuator constraint
    \begin{equation}
    |\dot\chi| \le \omega_{\max},
    \label{eq:yaw_constraint}
    \end{equation}
    where $\omega_{\max}$ is the maximum achievable yaw (or course) rate of the UAV.
\end{enumerate}

\section{Proposed Look-Angle Based Guidance Law}
This section discusses the geometry-driven proposed guidance law while emphasising the limitations of arcsine-based guidance law.

\subsection{Far-Field Limitation of Arcsine Shaping Function}

Look-angle shaping guidance laws used in\cite{jha2024standoff} impose an intrinsic feasibility constraint on the shaping gain. For a representative arcsine construction of the form
\begin{equation}
\lambda_a(e)=\frac{\pi}{2}+\sin^{-1}(k_D e), \quad  |k_D e(t)|<1
\end{equation}
A necessary feasibility condition at $t=0$ is  $|k_D e(t)|<1$, i.e., $k_D < \frac{1}{|e_0|}$. For far-field cases this forces $k_D$
 to be small, and therefore limits the achievable initial radial contraction.

\subsection{Proposed GLASS Shaping Law}

Motivated by the above limitation, we propose a bounded hyperbolic-tangent shaping guidance as
\begin{equation}
\sigma(e) = -\tanh(k_G e), \qquad k_G>0,
\end{equation}
embedded implicitly within the look-angle equation. Since $|\tanh(k_Ge)|<1$ for all $e\in\mathbb{R}$, the shaping function is globally defined and smooth for any combination of $k_G$ and $e$, with uniformly bounded slope
\begin{equation}
\left|\frac{d\sigma}{de}\right| = k_G\,\text{sech}^2(k_Ge) \le k_G.
\label{eq:sigma_slope}
\end{equation}
The highlight of this formulation is that the hyperbolic-tangent shaping guidance does not suffer from the limitation imposed by the initial conditions. 
Let the desired standoff curve be given in polar form $d=r(\gamma)$, and define the geometry coupling
\begin{equation}
a(\gamma,d)=\frac{r'(\gamma)}{d}.
\end{equation}
The commanded look angle $\lambda$ is selected to satisfy the implicit GLASS constraint
\begin{equation}
\cos\lambda - a(\gamma,d)\sin\lambda = -\sigma(e).
\end{equation}

\subsection{Geometric Feasibility and Regularity}

\begin{theorem}[Global feasibility]
 Define
\begin{equation}
\alpha := \tan^{-1}(a),\qquad
\gamma_\lambda(e) := \cos^{-1}\!\left(-\frac{\sigma(e)}{\sqrt{1+a^2}}\right)\in[0,\pi],
\label{eq:def_phi_theta}
\end{equation}. For any $e\in\mathbb{R}$, the algebraic constraint
\begin{equation}
\cos\lambda - a\sin\lambda = -\sigma(e)
\label{eq: GLASS condition}
\end{equation}
admits a real solution $\lambda(e,\gamma,d)$
and the two solution branches
\begin{equation}
\lambda_d(e) = -\phi + s\,\gamma_\lambda(e),\qquad s\in\{+1,-1\}.
\label{eq:lambda_branches}
\end{equation}
Then the following hold:
\begin{enumerate}
\item \textbf{Existence for all $e\in\mathbb{R}$.} The quantity $\gamma_\lambda(e)$ is well-defined for all $e\in\mathbb{R}$, i.e.,
\begin{equation}
-\frac{\sigma(e)}{\sqrt{1+a^2}} \in (-1,1)\quad \forall e\in\mathbb{R},
\end{equation}
and hence $\lambda_d(e)$ exists globally without any initial-condition-dependent feasibility constraint.

\item \textbf{Boundedness and smoothness.} Each branch $\lambda_d(e)$ is continuous for all $e\in\mathbb{R}$ and is continuously differentiable with bounded slope. In particular, for all $e\in\mathbb{R}$,
\begin{equation}
\bigg|\frac{d\lambda_d}{de}\bigg|
=
\bigg|\frac{d\gamma_\lambda}{de}\bigg|
\le
\frac{k_G}{\sqrt{1+a^2}}.
\label{eq:lambda_slope_bound}
\end{equation}

\end{enumerate}

Moreover, each solution branch is continuous and continuously differentiable in $e$, with globally bounded slope.
\end{theorem}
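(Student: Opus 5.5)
The plan is to reduce the implicit constraint \eqref{eq: GLASS condition} to a single shifted cosine equation and then read off existence and regularity from the explicit inverse. Note that the statement uses $\phi$ in \eqref{eq:lambda_branches} but defines $\alpha=\tan^{-1}(a)$; I take $\phi\equiv\alpha$ throughout. Since $\alpha\in(-\pi/2,\pi/2)$, we have
\[
\cos\alpha=\frac{1}{\sqrt{1+a^2}}>0,\qquad \sin\alpha=\frac{a}{\sqrt{1+a^2}}.
\]
This gives the harmonic-addition identity
\[
\cos\lambda-a\sin\lambda=\sqrt{1+a^2}\,\cos(\lambda+\alpha).
\]
Hence \eqref{eq: GLASS condition} is equivalent to
\[
\cos(\lambda+\alpha)=-\frac{\sigma(e)}{\sqrt{1+a^2}}.
\]

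\textbf{Existence.} Because $|\sigma(e)|=|\tanh(k_Ge)|<1$ for every real $e$, and $\sqrt{1+a^2}\ge 1$, the right-hand side lies strictly inside $(-1,1)$ for all $e$ and all $(\gamma,d)$ with $d>0$. The principal value $\gamma_\lambda(e)=\cos^{-1}(\cdot)$ is therefore well defined, and in fact lies in the open interval $(0,\pi)$. The full solution set of the cosine equation is $\lambda=-\alpha\pm\gamma_\lambda(e)+2\pi m$. Modulo $2\pi$, these are exactly the two branches \eqref{eq:lambda_branches}. No restriction on $k_G$ or on $e_0$ enters, which contrasts with the arcsine condition $k_D<1/|e_0|$.

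\textbf{Regularity and slope bound.} For fixed $(\gamma,d)$, $\alpha$ does not depend on $e$, so $d\lambda_d/de=s\,d\gamma_\lambda/de$ and the two absolute values coincide. Set $u(e)=-\sigma(e)/\sqrt{1+a^2}$. This is smooth in $e$, and $|u|\le c<1$ uniformly, with $c=1/\sqrt{1+a^2}$ times $\sup|\tanh|$. Composing with $\cos^{-1}$, which is smooth on $(-1,1)$, gives
\[
\frac{d\gamma_\lambda}{de}=-\frac{u'(e)}{\sqrt{1-u^2}},\qquad u'(e)=\frac{k_G\,\mathrm{sech}^2(k_Ge)}{\sqrt{1+a^2}}.
\]
Thus continuity and $C^1$ smoothness follow immediately.

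\textbf{Main obstacle: the slope bound.} The factor $1/\sqrt{1-u^2}$ exceeds $1$, so the bound \eqref{eq:lambda_slope_bound} does not follow from \eqref{eq:sigma_slope} alone. I would use the identity $\mathrm{sech}^2=1-\tanh^2$ and write $A:=1+a^2\ge1$ and $t=\tanh^2(k_Ge)\in[0,1)$. Then
\[
\frac{\mathrm{sech}^2}{\sqrt{1-\tanh^2/A}}=\frac{1-t}{\sqrt{1-t/A}}.
\]
Since $A\ge1$, we have $1-t/A\ge 1-t$, so this quantity is at most $\frac{1-t}{\sqrt{1-t}}=\sqrt{1-t}\le 1$. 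This yields $|d\gamma_\lambda/de|\le k_G/\sqrt{1+a^2}$ for all $e$, uniformly in $(\gamma,d)$, and establishes the global boundedness of the slope claimed in the final sentence of the theorem.
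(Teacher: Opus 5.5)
Your existence argument coincides with the paper's: the phase-shift identity, followed by $|\sigma|<1\le\sqrt{1+a^2}$. Reading $\phi$ as $\alpha$ is the intended interpretation.

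For the slope bound you take a genuinely different route, and yours is the one that actually proves the claim. The paper bounds the two factors of $d\gamma_\lambda/de$ separately. It asserts that $0<1-\sigma^2/(1+a^2)\le 1$ implies $1/\sqrt{1-\sigma^2/(1+a^2)}\le 1$, but that inequality is reversed: the factor is $\ge 1$. The resulting intermediate claim $|d\gamma_\lambda/de|\le|d\sigma/de|/\sqrt{1+a^2}$ is therefore false in general. For the circle ($a=0$) one has exactly $|d\gamma_\lambda/de|=k_G\,\mathrm{sech}(k_Ge)$, which exceeds $|d\sigma/de|=k_G\,\mathrm{sech}^2(k_Ge)$ whenever $e\neq0$.

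Your idea of pairing the blow-up factor with $\mathrm{sech}^2=1-\tanh^2$ before estimating is what rescues the result. It gives the sharper pointwise bound $|d\gamma_\lambda/de|\le k_G\,\mathrm{sech}(k_Ge)/\sqrt{1+a^2}$, uniformly in $(\gamma,d)$. This bound holds with equality when $a=0$. It shows the theorem's constant $k_G/\sqrt{1+a^2}$ is attained at $e=0$ for every $a$, so the stated bound is sharp. It also shows the slope decays in the far field.

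One small slip: your claim that $|u|\le c<1$ uniformly fails when $a=0$, since then $c=\sup|\tanh|=1$. Nothing breaks, because $C^1$ regularity needs only the pointwise bound $|u|<1$, and your slope argument never uses the uniform one. However, $a=0$ is exactly the case where any estimate bounding $1/\sqrt{1-u^2}$ by a constant, as the paper's does, must fail, since that factor then equals $\cosh(k_Ge)$.
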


\begin{proof}
Using the phase-shift identity
\begin{equation}
\cos\lambda - a\sin\lambda
= \sqrt{1+a^2}\cos(\lambda+\alpha), \qquad \alpha=\tan^{-1}(a),
\end{equation}
the constraint becomes
\begin{equation}
\cos(\lambda+\alpha)=\frac{-\sigma(e)}{\sqrt{1+a^2}}.
\end{equation}
Because $|\sigma(e)|<1$ and $\sqrt{1+a^2}\ge1$, the argument of $\cos^{-1}(\cdot)$ always lies strictly in $(-1,1)$. Hence, a real solution exists for all $e\in\mathbb{R}$, independent of initial conditions.
Differentiating \eqref{eq:def_phi_theta} gives
\begin{equation}
\frac{d\gamma_\lambda}{de}
=
-\frac{1}{\sqrt{1-\left(\sigma(e)/\sqrt{1+a^2}\right)^2}}
\cdot
\frac{1}{\sqrt{1+a^2}}
\cdot
\frac{d\sigma}{de}.
\label{eq:dtheta_de}
\end{equation}
Because $|\sigma(e)|<1$ and $\sqrt{1+a^2}\ge 1$, we have
\begin{align}
\begin{split}
 0< 1-\left(\frac{\sigma(e)}{\sqrt{1+a^2}}\right)^2 \le 1 \\
\Rightarrow \quad
\frac{1}{\sqrt{1-\left(\sigma(e)/\sqrt{1+a^2}\right)^2}} \le \frac{1}{\sqrt{1-0}} = 1,   
\end{split} 
\end{align}
and therefore, combining with \eqref{eq:sigma_slope} in \eqref{eq:dtheta_de} yields the global slope bound
\begin{equation}
\bigg|\frac{d\gamma_\lambda}{de}\bigg|
\le
\frac{1}{\sqrt{1+a^2}}\,\bigg|\frac{d\sigma}{de}\bigg|
\le
\frac{k}{\sqrt{1+a^2}}.
\end{equation}
Since $\lambda_d(e)=-\phi+s\gamma_\lambda(e)$, the same bound holds for $d\lambda_d/de$, proving \eqref{eq:lambda_slope_bound}.
Since $\sigma(e)$ is smooth with bounded derivative, and the denominator in the implicit derivative expression never vanishes, the slope $d\lambda/de$ remains globally bounded. Therefore, no arcsine-type singularity or initial-condition-dependent feasibility constraint arises.
\end{proof}

\subsection{Closed-Loop Stability on General Standoff Curves}

Define the scalar standoff error
\begin{equation}
e(t)=d(t)-r(\gamma(t)).
\end{equation}
Differentiating and substituting the polar engagement kinematics yields
\begin{equation}
\dot e = V_g\left(\cos\lambda - a(\gamma,d)\sin\lambda\right).
\end{equation}
Imposing the GLASS constraint \eqref{eq: GLASS condition} gives the reduced scalar dynamics
\begin{equation}
\dot e = -V_g\tanh(k_G e).
\label{eq:e_dyn_tanh}
\end{equation}
In the ideal guidance setting, we assume the commanded look angle 
$\lambda$ satisfies the algebraic GLASS constraint at all times, yielding the reduced scalar error dynamics below.
\begin{theorem}[Global asymptotic convergence]
Consider the scalar dynamics \eqref{eq:e_dyn_tanh}, then
\begin{enumerate}
\item Solutions exist uniquely for all $t\ge0$.
\item The equilibrium $e=0$ is globally asymptotically stable.
\item The error evolves monotonically toward zero without overshoot.
\end{enumerate}
\label{thm: theorem 2}
\end{theorem}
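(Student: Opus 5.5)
We treat \eqref{eq:e_dyn_tanh} as a scalar autonomous ODE $\dot e = f(e)$ with $f(e) = -V_g\tanh(k_G e)$, where $V_g>0$ is constant (or, more generally, bounded away from zero). The three claims then follow from standard arguments in the order they are listed.

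\textbf{Existence and uniqueness.} The map $f$ is smooth. Its derivative satisfies $|f'(e)| = V_g k_G\,\mathrm{sech}^2(k_G e) \le V_g k_G$ by \eqref{eq:sigma_slope}, so $f$ is globally Lipschitz with constant $V_g k_G$. The Picard--Lindel\"of theorem gives a unique local solution. Global Lipschitz continuity, or equivalently the bound $|f(e)|\le V_g$, rules out finite escape, since $|e(t)| \le |e_0| + V_g t$. Solutions therefore exist uniquely for all $t\ge 0$.

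\textbf{Global asymptotic stability.} Take the Lyapunov candidate $W(e) = \tfrac12 e^2$, which is positive definite and radially unbounded. Along trajectories,
\begin{equation*}
\dot W = -V_g\, e\tanh(k_G e) < 0 \quad \forall e\neq 0,
\end{equation*}
because $e\tanh(k_Ge)>0$ whenever $e\neq0$. By Lyapunov's direct method with radial unboundedness, $e=0$ is globally asymptotically stable. The convergence rate can also be made explicit. For $|e|\le \varepsilon$ one has $\tanh(k_G|e|)\ge \tanh(k_G\varepsilon)\,|e|/\varepsilon$, which gives a local exponential rate. For large $|e|$ one has $\tanh(k_G|e|)\ge\tanh(k_G\varepsilon)$, which gives a near-constant contraction speed. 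This two-regime bound will be useful later for the settling-time result.

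\textbf{Monotonicity and no overshoot.} In one dimension, uniqueness implies that trajectories cannot cross the equilibrium $e=0$. If some solution reached $e(t^*)=0$ at a finite time $t^*$, it would coincide with the zero solution for all $t$, contradicting $e_0\neq0$. Hence $\operatorname{sign}(e(t))=\operatorname{sign}(e_0)$ for all $t$. Then $\frac{d}{dt}|e| = -V_g\tanh(k_G|e|) <0$, so $|e(t)|$ decreases strictly and monotonically, and no overshoot occurs. In fact, separation of variables gives the closed form
\begin{equation*}
\sinh(k_G|e(t)|) = \sinh(k_G|e_0|)\,e^{-V_g k_G t},
\end{equation*}
which confirms all three claims directly and anticipates the settling-time analysis.

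\textbf{Main obstacle.} The scalar analysis itself is routine. The step needing care is the standing assumption that the GLASS constraint holds at all times and that $V_g$ is positive. One must note that the feasibility Theorem guarantees a real $\lambda$ exists for every $e$, so the reduced dynamics are well posed along every trajectory. The explicit solution also shows that $e$ does not reach zero in finite time, which is consistent with asymptotic rather than finite-time convergence.
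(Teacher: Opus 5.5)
Your proposal is correct and follows the paper's proof essentially step for step: a global Lipschitz bound for existence and uniqueness, Lyapunov's direct method for global asymptotic stability, and sign preservation for monotone convergence. The differences are minor. You use $W=\tfrac12 e^2$ where the paper uses $V=\frac{1}{k_G}\ln\cosh(k_G e)$ with $\dot V=-V_g\tanh^2(k_G e)$, your uniqueness argument additionally shows that $e$ never reaches zero in finite time, and your closed form $\sinh(k_G|e(t)|)=\sinh(k_G|e_0|)\,e^{-V_g k_G t}$ is what the paper derives only later for Theorem~\ref{thm:settling_time}. One wording fix: global Lipschitz continuity and $|f(e)|\le V_g$ are not equivalent, though each on its own rules out finite escape here.
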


\begin{proof}
The right-hand side is globally Lipschitz because
\begin{equation}
\left|\frac{d}{de}\left(-V_g\tanh(k_Ge)\right)\right|
= V_g k_G\,\text{sech}^2(k_Ge) \le V_g k_G.
\end{equation}
Hence, solutions exist uniquely for all time.
Consider the Lyapunov function
\begin{equation}
V(e)=\int_0^e \tanh(k_G s)\,ds
= \frac{1}{k_G}\ln\left(\cosh(k_G e)\right).
\end{equation}
Since $\cosh(\cdot)\ge1$, we have positive definite $V(e)\ge0$ with equality only at $e=0$. Along trajectories,
\begin{equation}
\dot V = \tanh(k_G e)\dot e
= -V_g\tanh^2(k_G e)\le0,
\end{equation}
and $\dot V=0$ only when $e=0$. Thus, $e=0$ is globally asymptotically stable.
Furthermore, if $e>0$ then $\dot e<0$, and if $e<0$ then $\dot e>0$. Hence the solution cannot change sign and converges monotonically to zero.
\end{proof}

\subsection{Analytical Tube-Entry Time}




\begin{theorem}
\label{thm:settling_time}
Under the hypotheses of Theorem~\ref{thm: theorem 2}, let $e(t)$ evolve according
to \eqref{eq:e_dyn_tanh} with initial error $e(0)=e_0$.
For a prescribed tube size $\varepsilon>0$, define the tube-entry time
\begin{equation}
T_{\varepsilon}\triangleq \inf\{t\ge 0:\ |e(t)|\le \varepsilon\}.
\label{eq:T_eps_def}
\end{equation}
If $|e_0|>\varepsilon$, then the tube-entry time is given in closed form by

\begin{equation}
T_{\varepsilon}
= \frac{1}{k_G V_g}\,
\ln\!\Bigg(\frac{\sinh\!\left(k_G|e_0|\right)}{\sinh\!\left(k_G\varepsilon\right)}\Bigg).
\label{eq:T_eps_closed_form}
\end{equation}

If $|e_0|\le \varepsilon$, then $T_{\varepsilon}=0$.
\end{theorem}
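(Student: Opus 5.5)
The plan is to integrate the scalar ODE \eqref{eq:e_dyn_tanh} explicitly by separation of variables, using the monotonicity established in Theorem~\ref{thm: theorem 2}.

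First dispose of the trivial case. If $|e_0|\le\varepsilon$, then $t=0$ belongs to the set in \eqref{eq:T_eps_def}, so $T_\varepsilon=0$.

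Next, reduce to a positive error. Assume $|e_0|>\varepsilon$. The right-hand side $-V_g\tanh(k_G e)$ is odd in $e$, so $\tilde e(t):=-e(t)$ solves the same equation. By uniqueness (Theorem~\ref{thm: theorem 2}, item 1), it suffices to treat $e_0>0$ and then replace $e_0$ by $|e_0|$. By item 3 of Theorem~\ref{thm: theorem 2}, $e(t)$ stays positive and is strictly decreasing as long as $e(t)\neq0$. Hence $|e(t)|=e(t)$, and $T_\varepsilon$ is the unique time at which $e(T_\varepsilon)=\varepsilon$.

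Existence of that crossing must be justified, not just assumed. Global asymptotic stability gives $e(t)\to0$, and continuity then yields a crossing of the level $\varepsilon$. Strict monotonicity makes the crossing unique and ensures $|e(t)|>\varepsilon$ for $t<T_\varepsilon$. This matches the infimum definition exactly.

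Now compute the time. On $[0,T_\varepsilon]$ we have $e>0$, so $\tanh(k_Ge)\neq0$ and we may separate variables:
\begin{equation*}
dt=-\frac{de}{V_g\tanh(k_Ge)}=-\frac{\cosh(k_Ge)}{V_g\sinh(k_Ge)}\,de .
\end{equation*}
Since $\frac{d}{de}\ln\sinh(k_Ge)=k_G\coth(k_Ge)$, integrating from $e_0$ to $\varepsilon$ gives
\begin{equation*}
T_\varepsilon=\frac{1}{k_GV_g}\bigl[\ln\sinh(k_Ge_0)-\ln\sinh(k_G\varepsilon)\bigr],
\end{equation*}
which is \eqref{eq:T_eps_closed_form}. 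Both logarithms are well defined because both arguments are positive. The result is positive because $\sinh$ is increasing and $e_0>\varepsilon$.

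For rigor, I would phrase the separation step as follows. Define $F(e)=\frac{1}{k_GV_g}\ln\sinh(k_Ge)$ on $(0,\infty)$. By the chain rule, $\frac{d}{dt}F(e(t))=\frac{\coth(k_Ge)}{V_g}\,\dot e=-1$, so $F(e(t))=F(e_0)-t$. Evaluating this at $t=T_\varepsilon$ gives the formula directly.

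The main obstacle is the careful handling of the infimum, that is, showing the first entry time coincides with the unique level crossing. This rests entirely on the sign-preservation and strict monotonicity from Theorem~\ref{thm: theorem 2}. The integration itself is routine.
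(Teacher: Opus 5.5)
Your proposal is correct and follows essentially the same route as the paper: you reduce to $e_0>0$ using the oddness of the dynamics, use the sign-preserving monotone decrease of $e(t)$, and separate variables with the antiderivative $\frac{1}{k_G}\ln\sinh(k_G e)$. Your extra steps (showing that the level $\varepsilon$ is actually reached and coincides with the infimum, and verifying $\frac{d}{dt}F(e(t))=-1$ via the chain rule) make explicit what the paper's proof takes for granted, without changing the argument.
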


\begin{proof}
From \eqref{eq:e_dyn_tanh}, the dynamics are odd and sign-preserving; thus it
suffices to consider $e_0>0$ (the case $e_0<0$ follows by symmetry and yields
the same expression in terms of $|e_0|$).
For $e>0$, \eqref{eq:e_dyn_tanh} implies $\dot e<0$, so $e(t)$ decreases
monotonically until it reaches $\varepsilon$.
Separation of variables for $e(t)\in[\varepsilon,e_0]$ yields
\begin{equation}
\frac{de}{\tanh(k_Ge)} = -V_g\,dt.
\label{eq:sep_vars}
\end{equation}
Integrating from $t=0$ to $t=T_{\varepsilon}$ and from $e(0)=e_0$ to
$e(T_{\varepsilon})=\varepsilon$ gives
\begin{equation}
\int_{e_0}^{\varepsilon}\frac{de}{\tanh(k_Ge)}
= -V_g\int_{0}^{T_{\varepsilon}}dt
= -V_g T_{\varepsilon}.
\label{eq:int_eq}
\end{equation}
Using the identity (valid for $e>0$)
\[
\int \frac{de}{\tanh(k_Ge)}
= \frac{1}{k_G}\ln\left(\sinh(k_Ge)\right)+C,
\]
we obtain
\begin{align}
\int_{e_0}^{\varepsilon}\frac{de}{\tanh(k_Ge)}
&= \frac{1}{k_G}\left[\ln\sinh(k_G\varepsilon)-\ln\sinh(k_Ge_0)\right] \nonumber\\
&= -\frac{1}{k_G}\ln\left(\frac{\sinh(k_Ge_0)}{\sinh(k_G\varepsilon)}\right).
\end{align}
Substituting into \eqref{eq:int_eq} yields
\[
-V_g T_{\varepsilon}
= -\frac{1}{k_G}\ln\left(\frac{\sinh(k_Ge_0)}{\sinh(k_G\varepsilon)}\right),
\]
and hence
\[
T_{\varepsilon}
= \frac{1}{k_G V_g}\ln\left(\frac{\sinh(k_Ge_0)}{\sinh(k_G\varepsilon)}\right).
\]
Replacing $e_0$ by $|e_0|$ to cover both signs gives \eqref{eq:T_eps_closed_form}.
If $|e_0|\le\varepsilon$, then the initial condition is already inside the tube,
so $T_{\varepsilon}=0$ by definition \eqref{eq:T_eps_def}.
\end{proof}

\section{Simulation results}
This section presents numerical simulation studies to evaluate the performance of the inspection strategy using the proposed GLASS-based guidance scheme. We first consider ideal guidance dynamics to examine the convergence properties of the geometric look-angle shaping mechanism. The analysis is then extended to a first-order course-control while satisfying turn-rate constraints. Finally, comparative simulations are conducted against representative guidance laws from the existing literature to highlight the advantages of the proposed approach. Unless stated otherwise, $V_g = 20$ m/s, a desired circular orbit with centre at the origin and radius $r_d = 200$ m, standoff settling tube $\epsilon = 0.05$ is considered.

\subsection{Ideal dynamics}
In the first case, numerical simulations are performed under ideal guidance dynamics using the polar kinematic model \eqref{eq:polar_kinematics}, with $\chi$ set to the commanded course angle $ \chi_\mathrm{d}$, as derived from the GLASS guidance law. Table \ref{tab:ideal_manifold_IC} lists the initial conditions used for simulation for two cases: initial position outside and inside the desired orbit.

\begin{table}[ht]
\centering
\caption{Initial conditions for ideal dynamics}
\label{tab:ideal_manifold_IC}

\begin{tabular}{l c c c}
\toprule
Parameter                  & Symbol          & \multicolumn{2}{c}{Value} \\
\cmidrule(lr){3-4}
                           &                 & Outside       & Inside \\
\midrule
Initial position           & $(x_0, y_0)$    & (450, -250) m & (45, -25) m \\
Initial range              & $d_0$           & 514.78 m      & 51.48 m \\
Initial standoff error     & $e_0$           & 314.78$^\circ$ & -148.52 m \\
Initial LOS angle          & $\gamma_{\mathrm{0}}$ & -29.05$^\circ$ & -29.05$^\circ$ \\
\bottomrule
\end{tabular}
\end{table}


Fig. \ref{fig: Results for ideal dynamics for outside initial position} illustrates the simulation results for outside initial positions. Fig. \ref{fig: UAV trajectories outside ideal} plots all UAV trajectories that converge smoothly toward the desired circular orbit. The corresponding standoff error profiles reducing to negligibly low value are shown in Fig. \ref{fig: Error profiles outside ideal}. As indicated in Fig. \ref{fig: Error profiles outside ideal}, increasing $k_G$ values makes the radial error reaching to the prescribed $\epsilon$
tube, with larger shaping gains yielding shorter tube-entry times. Fig. \ref{fig: Turn rate variations outside ideal} depicts the turn-rate profiles associated with the convergence maneuver. Larger $k_G$
generate higher turn-rate peaks, resulting in large maximum curvature, as evident in Fig. \ref{fig: Curvature with standoff error outside ideal}.
\begin{figure}
    \begin{subfigure}[b]{.25\textwidth}
			\centering			\includegraphics[width=\linewidth]{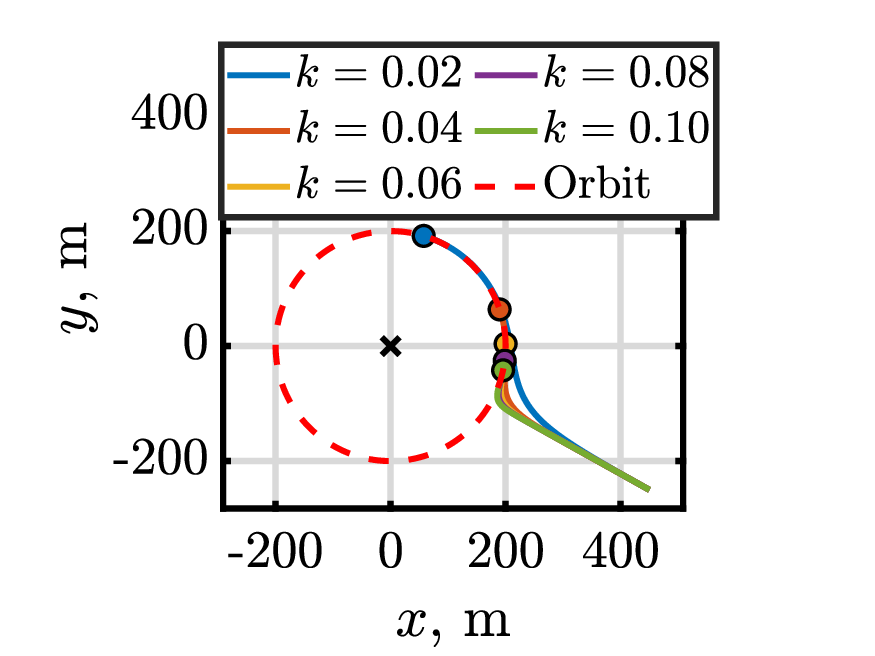}
			\caption{ UAV trajectories}
			\label{fig: UAV trajectories outside ideal}     
		\end{subfigure}%
		\begin{subfigure}[b]{.25\textwidth}
			\centering		\includegraphics[width=\linewidth]{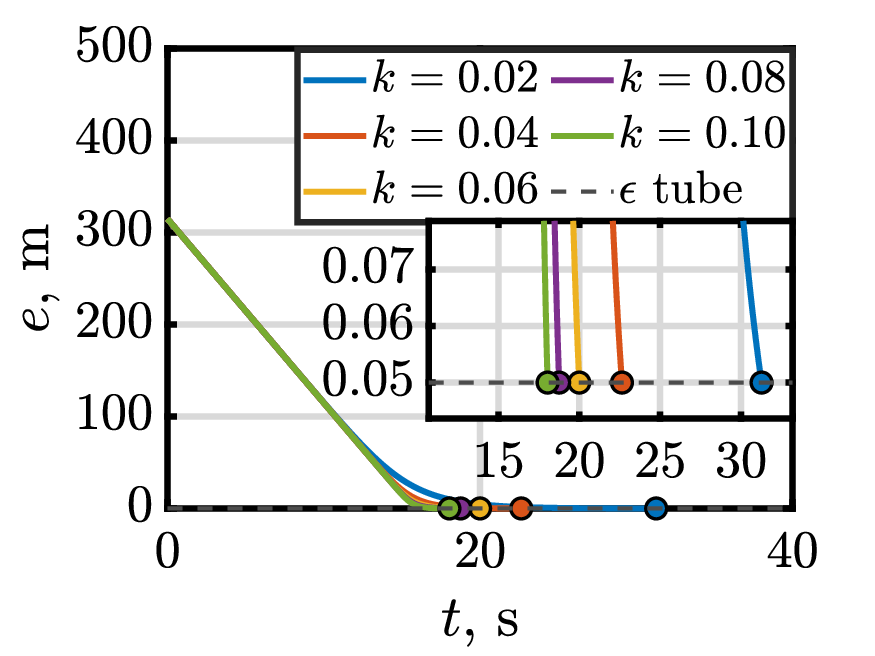}
			\caption{ Standoff error profiles}
			\label{fig: Error profiles outside ideal}
		\end{subfigure}%
        \qquad
       \begin{subfigure}[b]{.25\textwidth}
			\centering			\includegraphics[width=\linewidth]{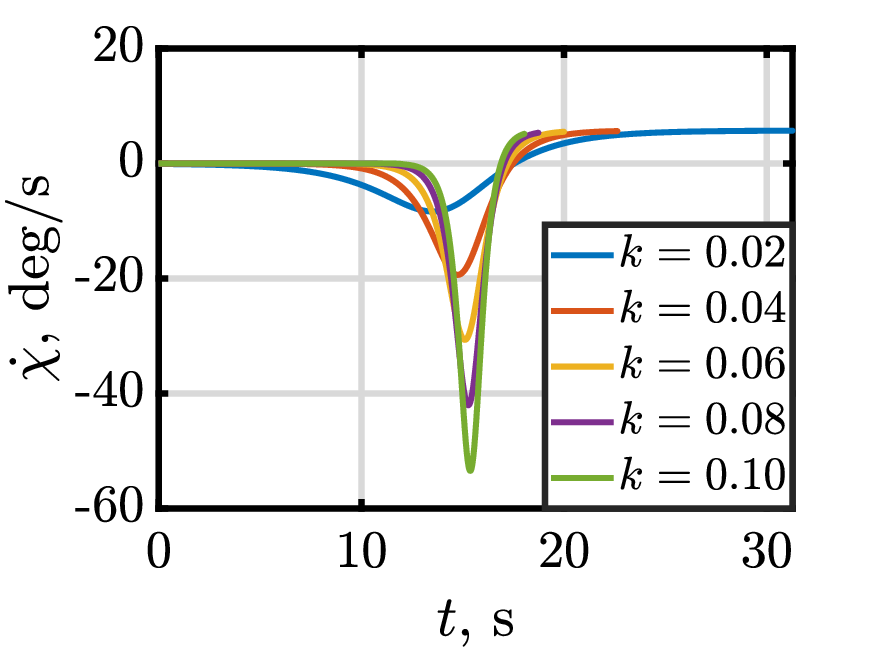}
			\caption{ Turn rate variations}
			\label{fig: Turn rate variations outside ideal}     
		\end{subfigure}%
		\begin{subfigure}[b]{.25\textwidth}
			\centering		\includegraphics[width=\linewidth]{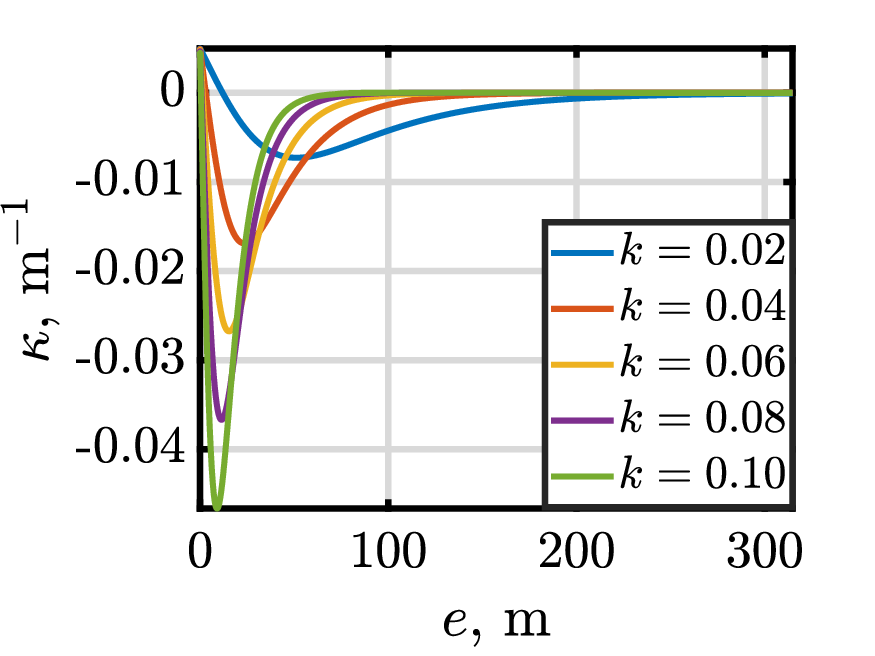}
			\caption{ Curvature with standoff error }
			\label{fig: Curvature with standoff error outside ideal}
		\end{subfigure}%
        \caption{ Results for ideal dynamics for outside initial position}
        \label{fig: Results for ideal dynamics for outside initial position}
        \end{figure}
        Similarly, Fig. \ref{fig: Results for ideal dynamics for inside initial position} plots numerical results for inside initial positions. UAV trajectories are plotted in Fig. \ref{fig: UAV trajectories inside ideal} with the corresponding standoff error profiles reducing to zero is shown in Fig. \ref{fig: Error profiles inside ideal}. The turn-rate profiles are shown in Fig. \ref{fig: Turn rate variations inside ideal}, where increasing $k_G$ results in turn rate peaking behaviour. Notably, the curvature–standoff error relationship in Fig. \ref{fig: Curvature with standoff error inside ideal} remains smooth and bounded across all cases. For both outside and inside initial conditions, the numerically obtained settling times closely match the analytical predictions for the ideal guidance dynamics, as listed in Table \ref{tab:ideal_settling_times}.
        \begin{table}[t]
\centering
\caption{Closed-form analytic settling time $T_{\mathrm{s}}$ for ideal-manifold dynamics
under inside and outside initial conditions for different gain values $k_G$.}
\label{tab:ideal_settling_times}
\setlength{\tabcolsep}{3pt}
\begin{tabular}{c|ccccc}
\hline
Standoff 
& $k_G=0.02$ 
& $k_G=0.04$ 
& $k_G=0.06$ 
& $k_G=0.08$ 
& $k_G=0.10$ \\ 
\hline
Inside    
& \multirow{2}{*}{22.956} 
& \multirow{2}{*}{14.328} 
& \multirow{2}{*}{11.689} 
& \multirow{2}{*}{10.444} 
& \multirow{2}{*}{9.729} \\
$(e_0<0)$ & & & & & \\
Outside  
& \multirow{2}{*}{31.276} 
& \multirow{2}{*}{22.641} 
& \multirow{2}{*}{20.002} 
& \multirow{2}{*}{18.757} 
& \multirow{2}{*}{18.042} \\ 
$(e_0>0)$ & & & & & \\
\hline
\end{tabular}
\end{table}

        \begin{figure}
    \begin{subfigure}[b]{.25\textwidth}
			\centering			\includegraphics[width=\linewidth]{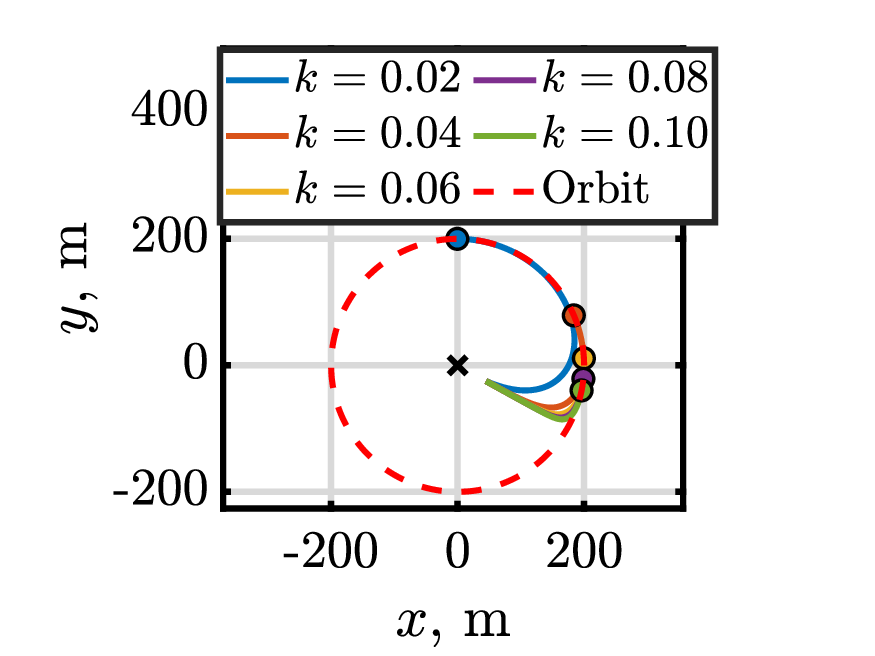}
			\caption{ UAV trajectories}
			\label{fig: UAV trajectories inside ideal}     
		\end{subfigure}%
		\begin{subfigure}[b]{.25\textwidth}
			\centering		\includegraphics[width=\linewidth]{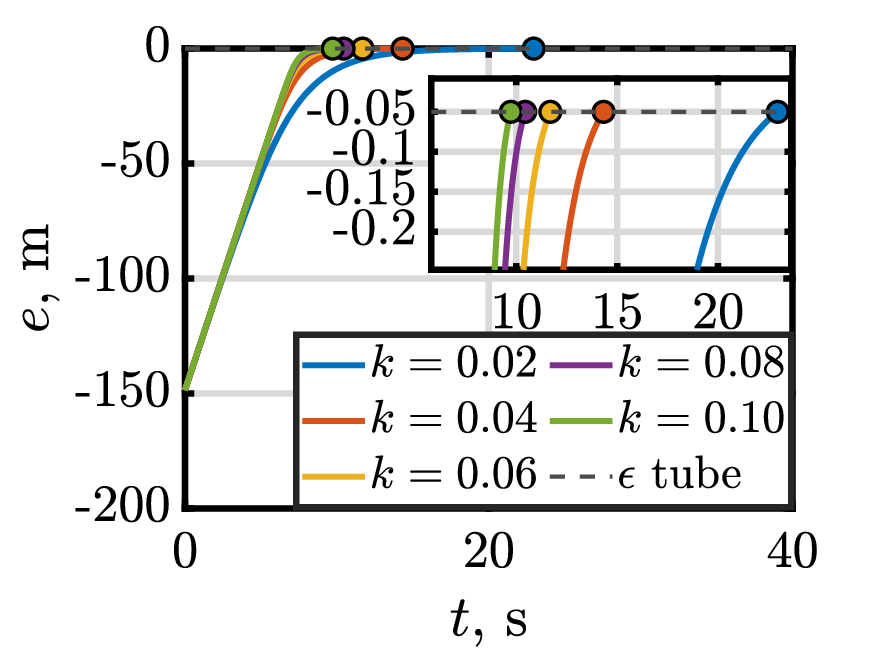}
			\caption{ Standoff error profiles}
			\label{fig: Error profiles inside ideal}
		\end{subfigure}%
        \qquad
        \begin{subfigure}[b]{.25\textwidth}
			\centering			\includegraphics[width=\linewidth]{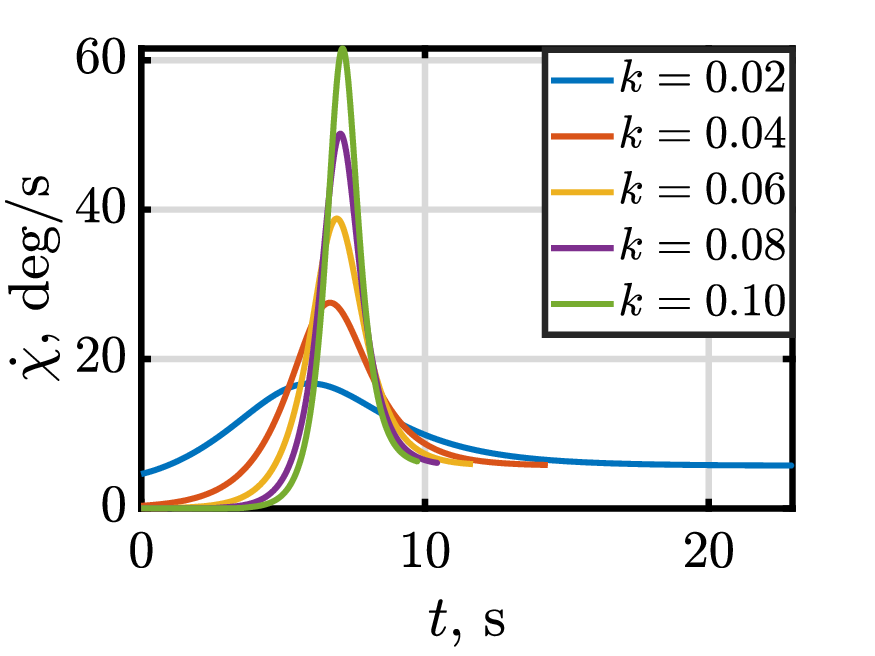}
			\caption{ Turn rate variations}
			\label{fig: Turn rate variations inside ideal}     
		\end{subfigure}%
		\begin{subfigure}[b]{.25\textwidth}
			\centering		\includegraphics[width=\linewidth]{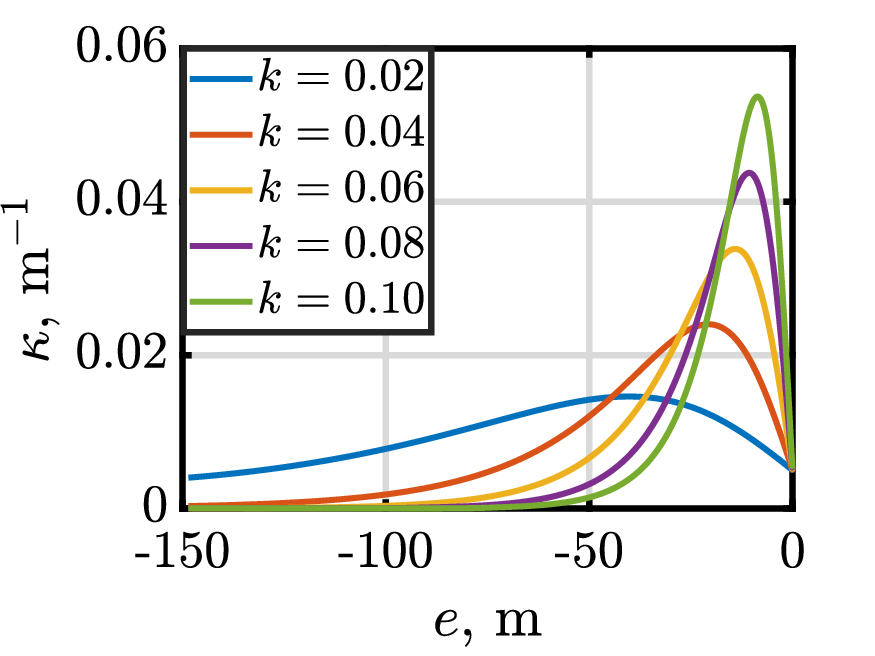}
			\caption{Curvature with standoff error}
			\label{fig: Curvature with standoff error inside ideal}
		\end{subfigure}%
        \caption{ Results for ideal dynamics for inside initial position}
        \label{fig: Results for ideal dynamics for inside initial position}
        \end{figure}
        
        \subsection{Simulation with nonideal initial heading}
        In the second case, the simulation is carried out with a first-order course-angle dynamics governed by
        \begin{align}
        \dot{\chi} = k_\chi(\chi_\mathrm{d} - \chi)
        \label{eq: firs order control}
        \end{align}
where $k_\chi$ denotes the controller gain., The design and control parameters used in the simulation are $(k_G,k_\chi,\dot{\chi}_\mathrm{max}) = (0.02,50,0.5 $rad/s$)$, respectively.  Fig. \ref{fig: Results for nonideal heading simulation for outside initial UAV position nonideal} presents the results for outside initial positions under first-order course control dynamics with both ideal and non-ideal initial heading offsets. Fig. \ref{fig: Trajectories outside nonideal} plots trajectories converging to the desired orbit despite significant heading mismatches (e.g., $-60 ^\circ, + 90 ^\circ$)
 demonstrating the robustness of the GLASS guidance law to initial orientation errors. The corresponding error profiles are plotted in Fig. \ref{fig: error outside nonideal} and exhibit monotonic convergence to the $\epsilon-$tube, with only a slight delay relative to the ideal-heading case due to finite course-loop bandwidth. The turn rate profiles are shown in Fig. \ref{fig: Turn rate profiles outside nonideal} with the corresponding curvature–standoff error relationship plotted in Fig. \ref{fig: Curvature variations with standoff error nonideal}. Fig. \ref{fig: Results for nonideal heading simulation for inside initial UAV position nonideal} shows the corresponding results for inside initial positions. Even with non-ideal initial heading offsets, all trajectories smoothly converge to the desired orbit, as seen in Fig. \ref{fig: Trajectories for inside nonideal}. The standoff error profiles in Fig. \ref{fig: standoff error profiles for inside nonideal} remain monotonically decaying to zero and enter the prescribed 
$\epsilon-$tube, with minor settling-time differences relative to the ideal-heading case. The corresponding turn rate profile and curvature variations with standoff errors are depicted in  Fig. \ref{fig: Turn rate profiles inside nonideal} and \ref{fig: Curvature with standoff error inside nonideal}. These results confirm that the geometric look-angle shaping mechanism maintains stability and bounded curvature demands even when both heading mismatch and finite course dynamics are present. 

         \begin{figure}[ht]
           \begin{subfigure}[b]{.25\textwidth}
			\centering			\includegraphics[width=\linewidth,keepaspectratio]{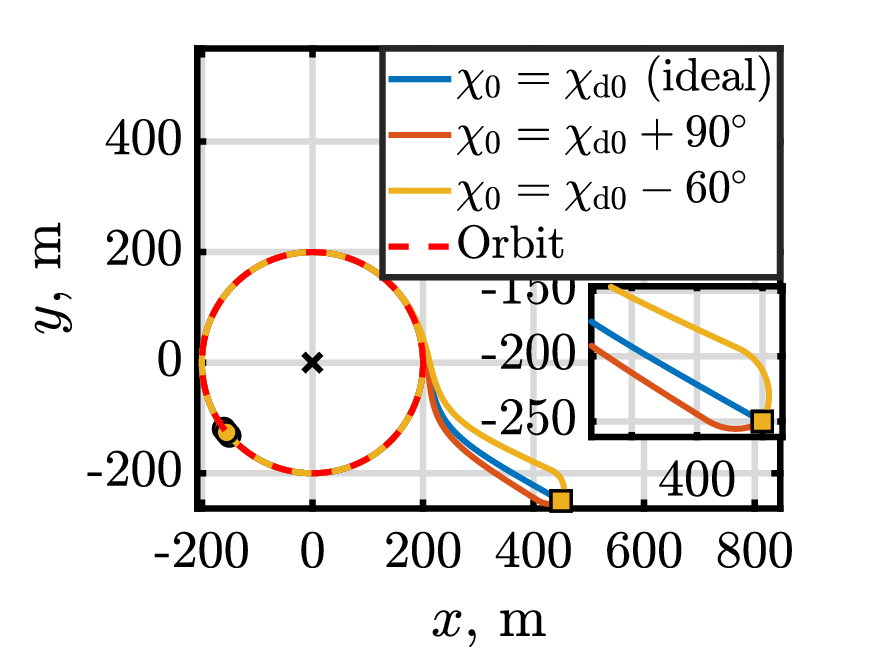}
			\caption{Trajectories }
			\label{fig: Trajectories outside nonideal}     
		\end{subfigure}%
        \begin{subfigure}[b]{.25\textwidth}
        \centering
			\includegraphics[width=\linewidth]{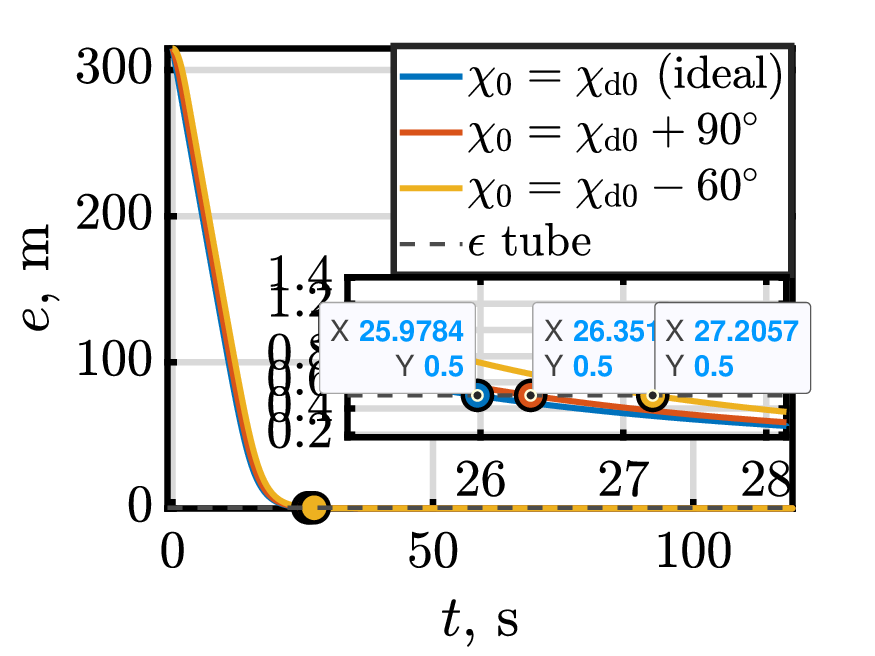}
			\caption{Standoff error profiles }
			\label{fig: error outside nonideal}
		\end{subfigure}%
        \qquad
        \begin{subfigure}[b]{.25\textwidth}
        \centering
			\includegraphics[width=\linewidth]{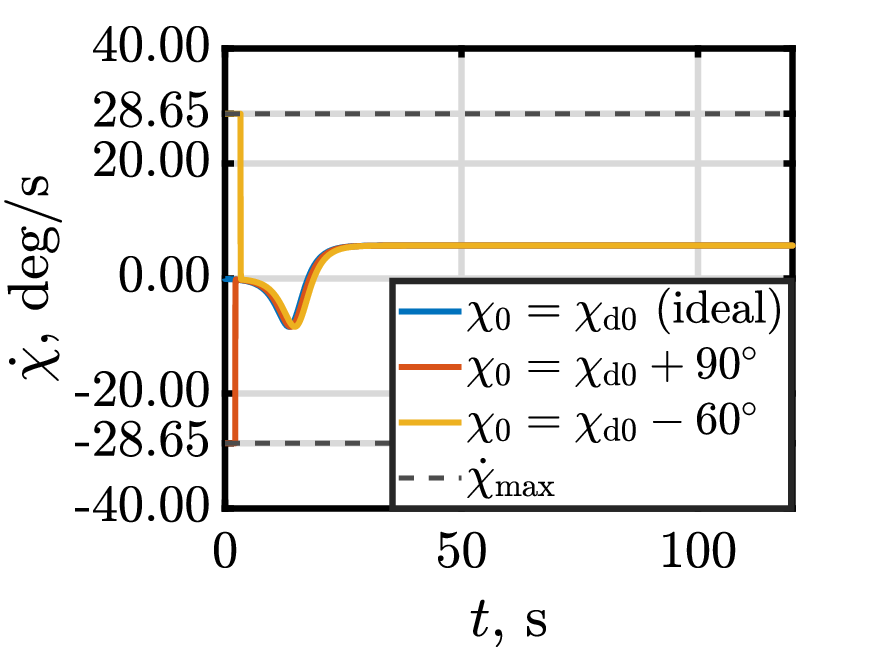}
			\caption{ Turn rate variations }
			\label{fig: Turn rate profiles outside nonideal}
		\end{subfigure}%
        \begin{subfigure}[b]{.25\textwidth}
        \centering		\includegraphics[width=\linewidth]{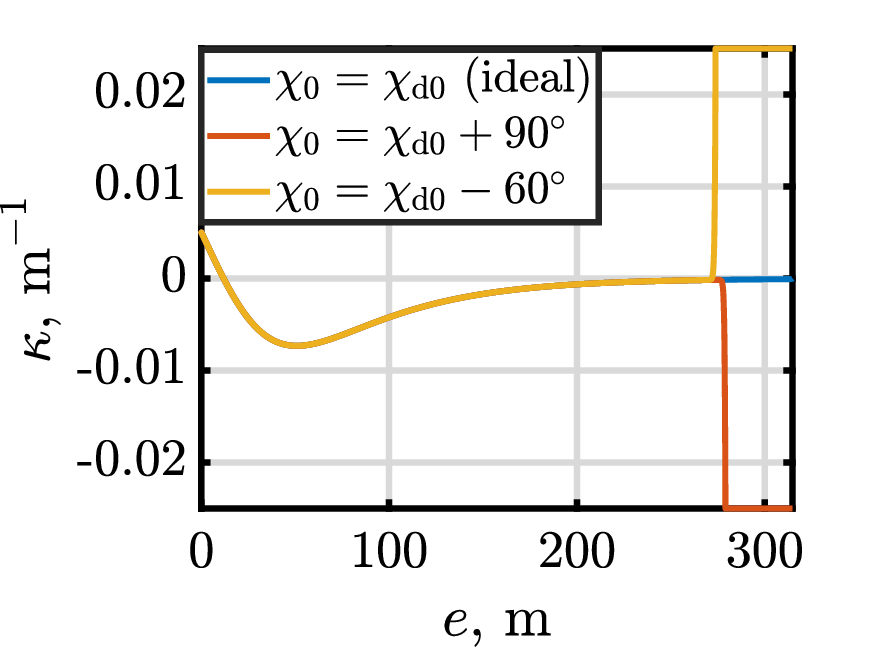}
			\caption{ Curvature with standoff error }
			\label{fig: Curvature variations with standoff error nonideal}
		\end{subfigure}%
        \caption{Results for outside initial UAV position}
        \label{fig: Results for nonideal heading simulation for outside initial UAV position nonideal}
        \end{figure}

        \begin{figure}[ht]
           \begin{subfigure}[b]{.25\textwidth}
			\centering			\includegraphics[width=\linewidth,keepaspectratio]{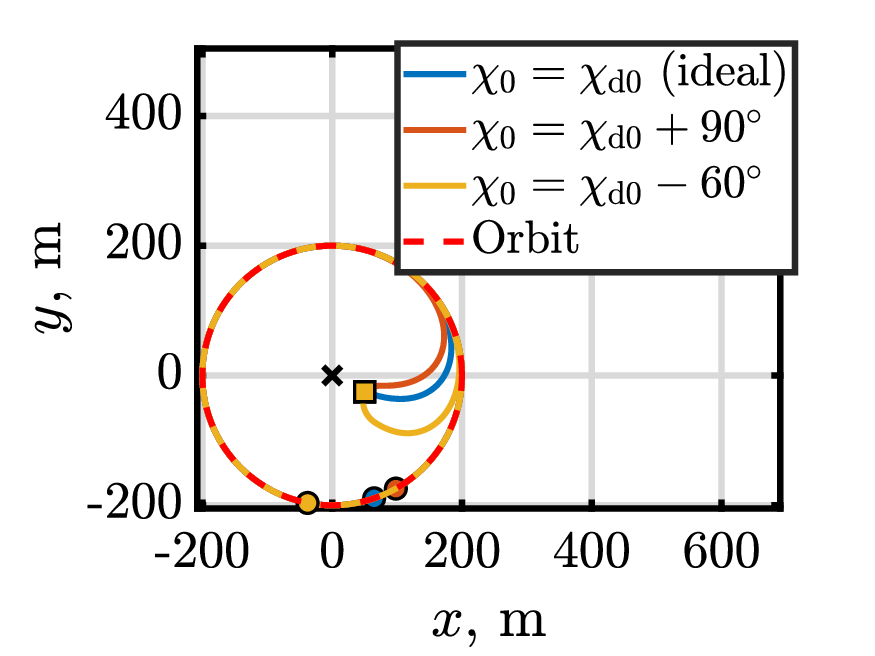}
			\caption{Trajectories}
			\label{fig: Trajectories for inside nonideal}     
		\end{subfigure}%
		\begin{subfigure}[b]{.25\textwidth}
        \centering		\includegraphics[width=\linewidth]{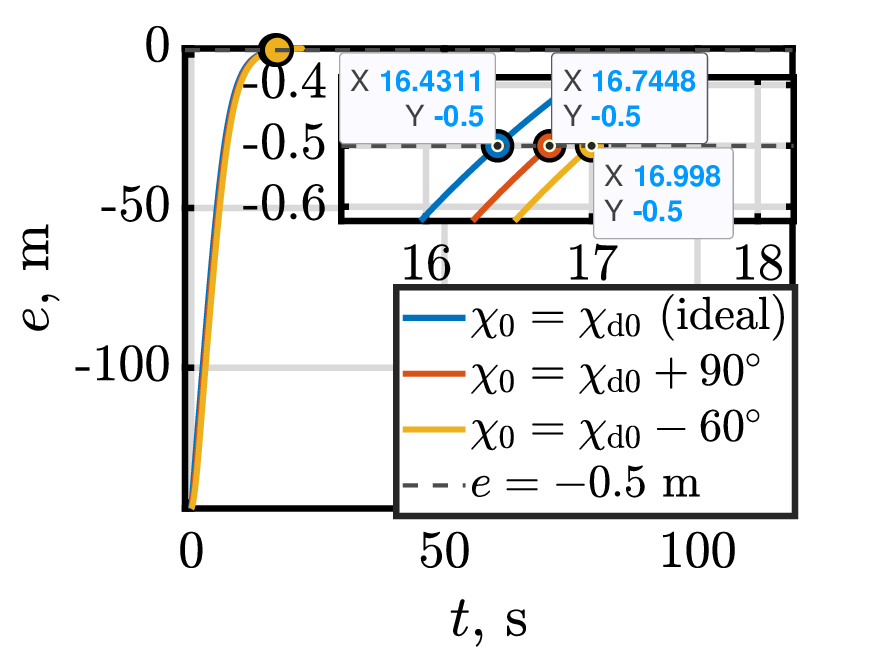}
			\caption{ Standoff error profiles }
			\label{fig: standoff error profiles for inside nonideal}
		\end{subfigure}%
        \qquad
        \begin{subfigure}[b]{.25\textwidth}
        \centering
			\includegraphics[width=\linewidth]{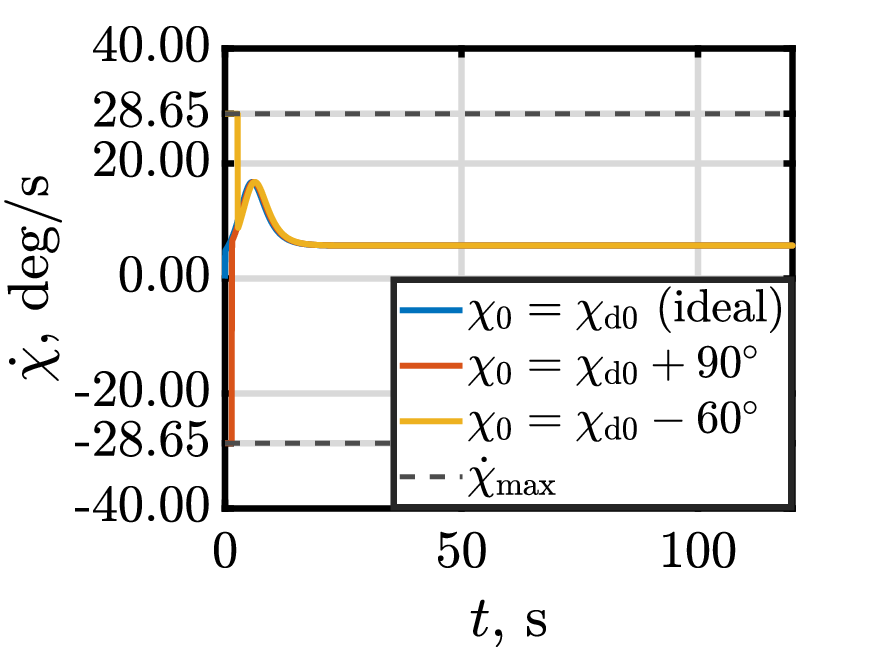}
			\caption{ Turn rate profiles }
			\label{fig: Turn rate profiles inside nonideal}
		\end{subfigure}%
        \begin{subfigure}[b]{.25\textwidth}
        \centering		\includegraphics[width=\linewidth]{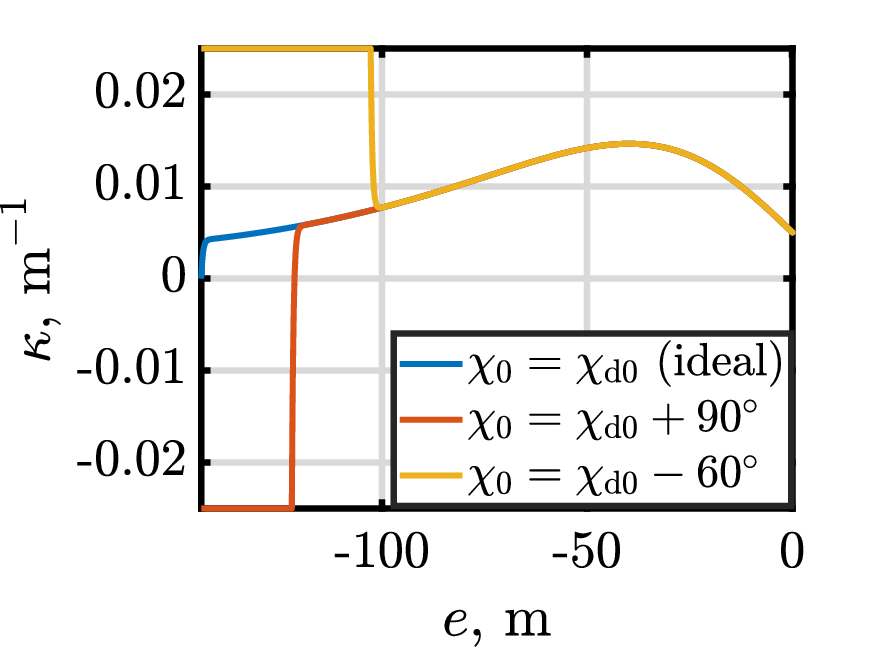}
			\caption{ Curvature with standoff error }
			\label{fig: Curvature with standoff error inside nonideal}
		\end{subfigure}%
        \caption{Results for inside initial UAV position}
        \label{fig: Results for nonideal heading simulation for inside initial UAV position nonideal}
        \end{figure}

Table \ref{tab: Initial conditions and settling times for heading mismatch} summarizes the initial conditions and corresponding analytical and simulated settling times for both outside and inside initial positions under varying heading mismatches. For each case, the analytical settling time $T_\mathrm{ana}$
is computed using the reduced scalar error dynamics \eqref{eq:e_dyn_tanh} whereas the simulated time $T_\mathrm{sim}$
is obtained from full nonlinear simulations including heading dynamics and actuator constraints.
For both outside and inside initial conditions, the analytical settling time remains constant for a given initial radial error, as expected from the closed-form expression which depends solely on $(e_0,k_G,V_g)$
 and is independent of initial heading mismatch. In contrast, the simulated settling time varies with the initial heading angle. This discrepancy arises because the analytical model assumes ideal heading tracking and immediate satisfaction of the algebraic GLASS constraint, whereas the simulation includes transient heading alignment dynamics governed by finite turn-rate limits.
In the outside cases, larger heading mismatches introduce an additional alignment phase before the radial-dominant dynamics become effective, resulting in simulated settling times slightly exceeding the analytical prediction. The deviation remains modest, confirming that the reduced scalar model captures the dominant convergence behavior. Similarly, in the inside cases, the differences between analytical and simulated times are small, indicating that the GLASS guidance rapidly enforces the desired radial contraction once the heading error is corrected.
Overall, the closeness between analytical and simulated settling times validates the theoretical reduction to scalar error dynamics, while the observed differences are primarily attributable to non-ideal heading transients and actuator rate limitations not accounted for in the analytical derivation.

\begin{table}[htbp]
\centering
\caption{Initial conditions and settling times for heading mismatch}
\label{tab: Initial conditions and settling times for heading mismatch}
\begin{tabular}{@{}lccccc@{}}
\toprule
\textbf{Case} & $(x_0, y_0)$, m & $\chi_0$, deg. & $\lambda_0$, deg. & $T_{\mathrm{ana}}$, s & $T_{\mathrm{sim}}$, s \\
\midrule
\multirow{3}{*}{\text{Outside }} & $(450, -250)$ & $150.73$ & $179.79$ & 25.519& 25.
978\\
 & $(450, -250)$ & $-119.27.7$ & $-90.21$ &25.519 & 26.363\\
 & $(450, -250)$ & $90.73$ & $119.79$ & 25.519& 27.199\\
\midrule
\multirow{3}{*}{\text{Inside } } 
 & $(50, -25)$ & $-20.15$ & $6.41$ & 16.977 & 16.442 \\
 & $(50, -25)$ & $69.85$ & $96.41$ & 16.977 & 17.287 \\
 & $(50, -25)$ & $-80.15$ & $-53.59$ & 16.977 & 16.598 \\
\bottomrule
\end{tabular}
\end{table}

\subsection{Comparative study}

To ensure a fair comparison between the proposed GLASS strategy and the arcsine-based
look-angle guidance\cite{jha2024standoff}, the initial commanded heading angle is
matched for both methods. Since the arcsine formulation directly prescribes the
look angle as
\[
\sigma_D = \psi - \gamma = \frac{\pi}{2} + \sin^{-1}(k_D e_D),
\]
where $e_D$ is the normalized range error defined in Eq.~(6) of \cite{jha2024standoff},
equality of the initial heading reduces to enforcing
$\sigma_D(0)=\sigma_G(0)$, where $\sigma_G(0)$ is the LOS-measured look
angle generated by the proposed method. For the circular standoff case, the GLASS constraint yields
\[
\cos \sigma_G = -\tanh\!\left(k_G (d_0 - r_d)\right),
\]
which uniquely determines the initial look angle for a selected orbit
direction. Equating the two look angles gives the explicit gain mapping
\[
k_D = \frac{\sin\!\left(\sigma_G(0) - \frac{\pi}{2}\right)}{e_{0,D}},
\]
where $e_{0,D} = -(1 - r_d/d_0)$ for $d_0 > r_d$.
For the representative case $r_d = 200\,\mathrm{m}$ and
$d_0 = 650\,\mathrm{m}$ (so that $d_0 - r_d = 450\,\mathrm{m}$) with
$k_G = 0.007$, the GLASS shaping term evaluates to
$\tanh(3.15) \approx 0.9963$, yielding
$e_{0,D} = -9/13 \approx -0.6923$ and the matched arcsine gain
$k_D \approx -1.439$.
This guarantees identical initial look angles and hence identical initial
headings for both guidance laws, ensuring that any subsequent differences
in convergence rate, curvature demand, or saturation behavior arise solely
from the intrinsic structure of the guidance design rather than from
mismatched initial conditions, as illustrated by the simulation results presented below. Figure \ref{fig: Comparative results for nonideal heading simulation for outside initial UAV position nonideal} presents comparative simulation results for an outside initial UAV position. As shown in Fig. \ref{fig: Comparative trajectories for outside nonideal}, both guidance laws eventually converge to the desired standoff circle. However, the transient behavior differs markedly. This behavior is consistent with the far-field analysis, wherein the proposed hyperbolic-tangent shaping preserves a significant radial component even for large initial errors. Figure \ref{fig: Comparative standoff error profiles for outside nonideal} compares the evolution of standoff error profiles. The GLASS strategy exhibits faster radial-error decay, reaching the $\epsilon$-tube significantly earlier than the arcsine-based method. The turn-rate profiles in Fig. \ref{fig: Comparative turn rate profiles outside nonideal} demonstrate that the improved convergence speed of GLASS does not come at the cost of excessive control effort. Figure \ref{fig: Comparative Curvature with standoff error outside nonideal} further illustrates the curvature characteristics as a function of the standoff error. The reference method shows larger curvature variations for moderate error values, reflecting its reliance on tangential motion correction.

         \begin{figure}[ht]
           \begin{subfigure}[b]{.25\textwidth}
			\centering			\includegraphics[width=\linewidth,keepaspectratio]{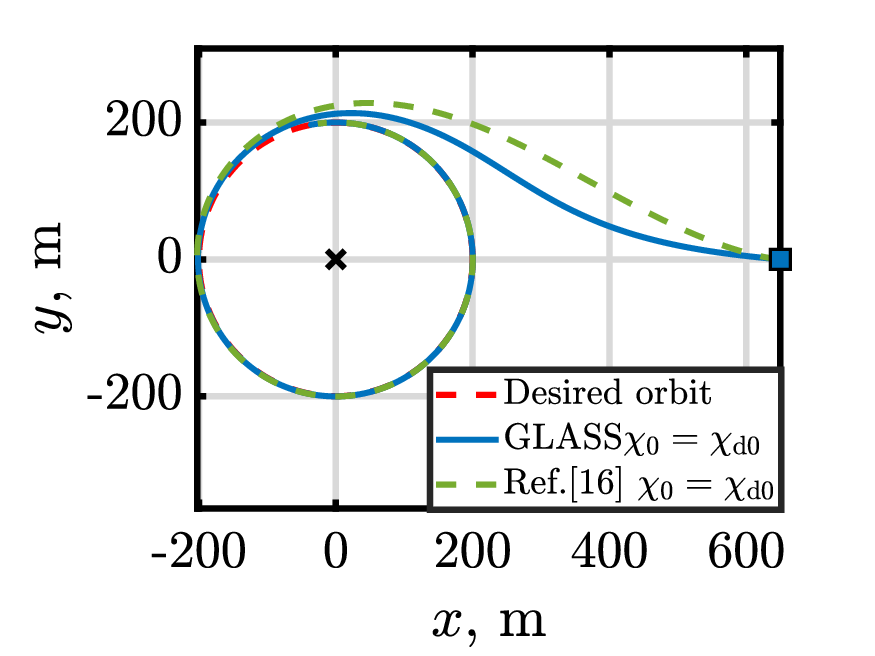}
			\caption{ Trajectories}
			\label{fig: Comparative trajectories for outside nonideal}     
		\end{subfigure}%
		\begin{subfigure}[b]{.25\textwidth}
        \centering		\includegraphics[width=\linewidth]{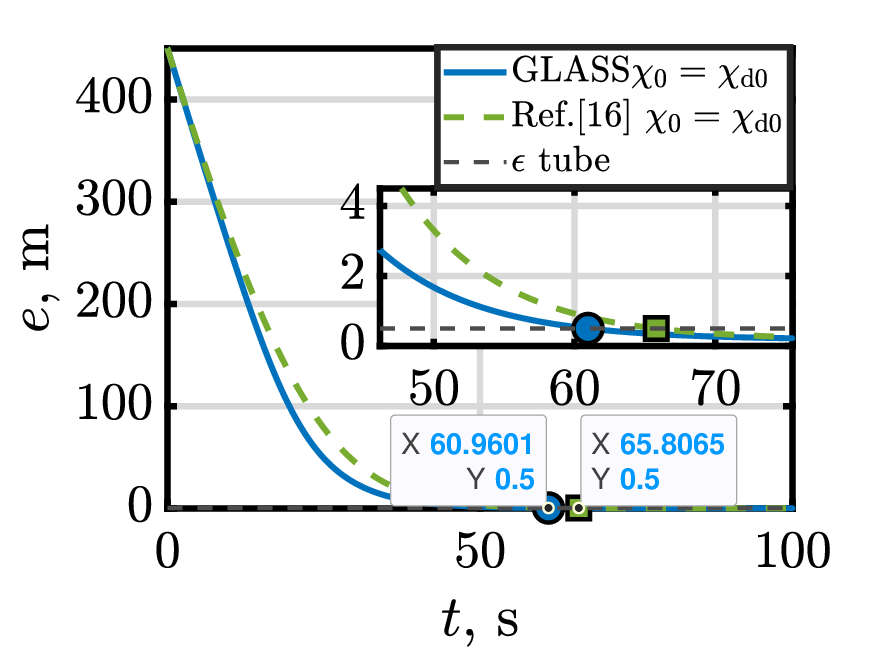}
			\caption{ Standoff error profiles }
			\label{fig: Comparative standoff error profiles for outside nonideal}
		\end{subfigure}%
        \qquad
        \begin{subfigure}[b]{.25\textwidth}
        \centering
			\includegraphics[width=\linewidth]{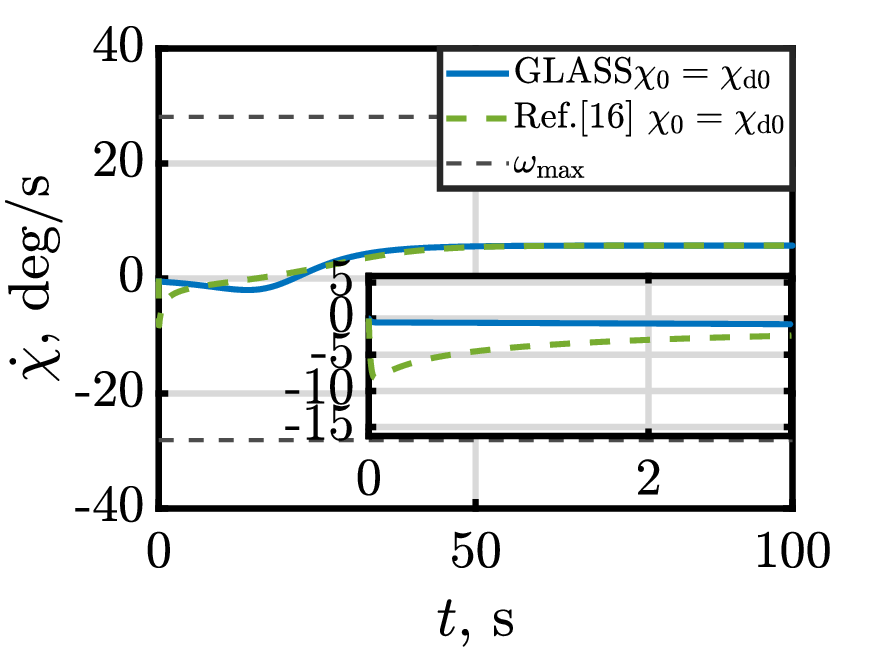}
			\caption{ Turn rate profiles }
			\label{fig: Comparative turn rate profiles outside nonideal}
		\end{subfigure}%
        \begin{subfigure}[b]{.25\textwidth}
        \centering		\includegraphics[width=\linewidth]{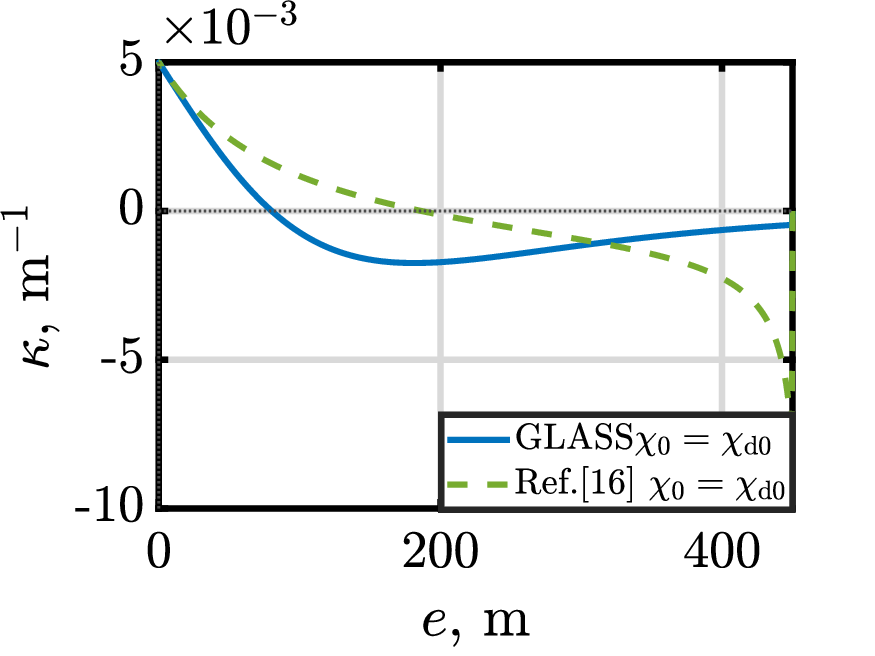}
			\caption{ Curvature with standoff error }
			\label{fig: Comparative Curvature with standoff error outside nonideal}
		\end{subfigure}%
        \caption{Comparative results for outside initial UAV position}
        \label{fig: Comparative results for nonideal heading simulation for outside initial UAV position nonideal}
        \end{figure}


\subsection{6DOF simulation results }

We validate the GLASS guidance scheme-to-6DOF embedding on the dynamic quadrotor model \cite{bouabdallah2004design} to assess the practicality of the proposed method in realistic applications.  
A quadrotor model cannot instantaneously realize $\chi=\chi_\mathrm{d}$. We therefore introduce a first-order course channel with rate saturation:
\begin{equation}
\dot\chi=\mathrm{sat}\left(k_\chi(\chi_\mathrm{f}-\chi),\,\omega_{\max}\right),
\end{equation}
where $\chi_\mathrm{f}$ is a filtered version of $\chi_d$ to mitigate oscillations induced by aggressive shaping gains:
\begin{equation}
\dot\chi_\mathrm{f}=\frac{1}{\tau_{\chi_\mathrm{d}}}\mathrm{wrap}(\chi_\mathrm{d}-\chi_\mathrm{f}).
\end{equation}

\subsubsection{Course-to-acceleration mapping}
Unlike fixed-wing kinematics, the OS4 quadrotor generates planar accelerations through thrust tilting. We convert the (filtered) course state $\chi$ into a horizontal velocity reference:
\begin{equation}
\hat t = \begin{bmatrix}\cos\chi\\ \sin\chi\end{bmatrix},\qquad
\hat r = \begin{bmatrix}\cos\gamma\\ \sin\gamma\end{bmatrix},\qquad
v_{\mathrm{ref}} = V_{\mathrm{ref}}\hat t - k_{\mathrm{rad}}\,e\,\hat r.
\end{equation}
The second term provides explicit radial correction and compensates for inner-loop lag and saturation, thereby reducing steady bias in the achieved orbit radius. A proportional velocity controller generates the commanded horizontal acceleration
\begin{equation}
a_{xy}^{\mathrm{cmd}} = k_v\,(v_{\mathrm{ref}}-v_{xy}),\qquad \|a_{xy}^{\mathrm{cmd}}\|\le a_{\max}.
\end{equation}

\subsubsection{Acceleration-to-attitude mapping and yaw synchronization}
Using the standard small-to-moderate tilt approximation, the commanded inertial accelerations $(a_x,a_y)$ are mapped into desired pitch $\theta_d$ and roll $\phi_d$ commands using the instantaneous yaw $\psi$
\begin{equation}
\theta_d = \frac{a_x\cos\psi+a_y\sin\psi}{g},\qquad
\phi_d   = \frac{a_x\sin\psi-a_y\cos\psi}{g},
\end{equation}
with saturations $|\phi_d|\le \phi_{\max}$ and $|\theta_d|\le \theta_{\max}$. The yaw command is chosen consistent with the course channel,
\begin{equation}
\psi_d=\chi,
\end{equation}
ensuring that the acceleration-to-attitude mapping remains coherent. Ref. \cite{bouabdallah2004design} can be referred for more details.
The simulation parameters are considered as
$m=0.24$ kg, $(I_x,I_y,I_z)=(2.3,2.3,4.0)\times 10^{-3}$ kg$\cdot$m$^2$, $l=0.20$ m, rotor limits $\Omega_i\in[0,600]$ rad/s, and actuator mapping coefficients $(b,d,J_r)=(3.0\times10^{-6},\,1.0\times10^{-7},\,2.0\times10^{-5})$.
The desired standoff radius is $r_d=20$ m around $(x_c,y_c)=(0,0)$ at altitude $z_d=10$ m. The GLASS shaping gain is set to $k_G=0.08$ (CCW), with a non-ideal course channel $(k_\chi,\omega_{\max})=(3.0,0.8)$ and filtering $\tau_{\chi_d}=0.6$ s. The translational outer loop uses $(V_{\mathrm{ref}},k_v,a_{\max},k_{\mathrm{rad}})=(2,2.5,8,0.8)$ and tilt limits $|\phi_d|,|\theta_d|\le 40^\circ$.

Figures \ref{fig: Trajectories outside nonideal 3D} and \ref{fig: Results for 6dof dynamic model} illustrate the 6DOF simulation results, which demonstrate successful embedding of the GLASS kinematic guidance into the full OS4 quadrotor dynamics. As shown in Fig. \ref{fig: Trajectories outside nonideal 3D}, the UAV captures the desired circular standoff orbit from an initial off-orbit condition. It converges to steady circulation at the prescribed altitude, with the planar trajectory closely tracking the desired orbit after a short transient. In Fig. \ref{fig: Results for 6dof dynamic model}, the position tracking plots confirm that $(x(t),y(t))$ converge to the sinusoidal reference generated by the phase-locked orbit, while the altitude $z(t)$ stabilizes rapidly at $z_\mathrm{d} = 10$ m.
The attitude responses $(\phi,\theta,\psi)$ exhibit damped oscillatory transients during the capture phase and then settle smoothly, indicating adequate inner-loop bandwidth relative to the outer-loop guidance dynamics. The control input profiles shows an initial thrust increase to achieve climb and orbit capture, followed by convergence of $U_1$ toward the steady hover-plus-centripetal requirement, while the moment channels $(U_2, U_3, U_4)$ decay to near-zero mean values once steady orbiting is achieved, confirming that sustained turning is realized primarily through constant tilt rather than persistent torque excitation.

     \begin{figure}[ht]
			\centering			
            \includegraphics[width=\linewidth,keepaspectratio]{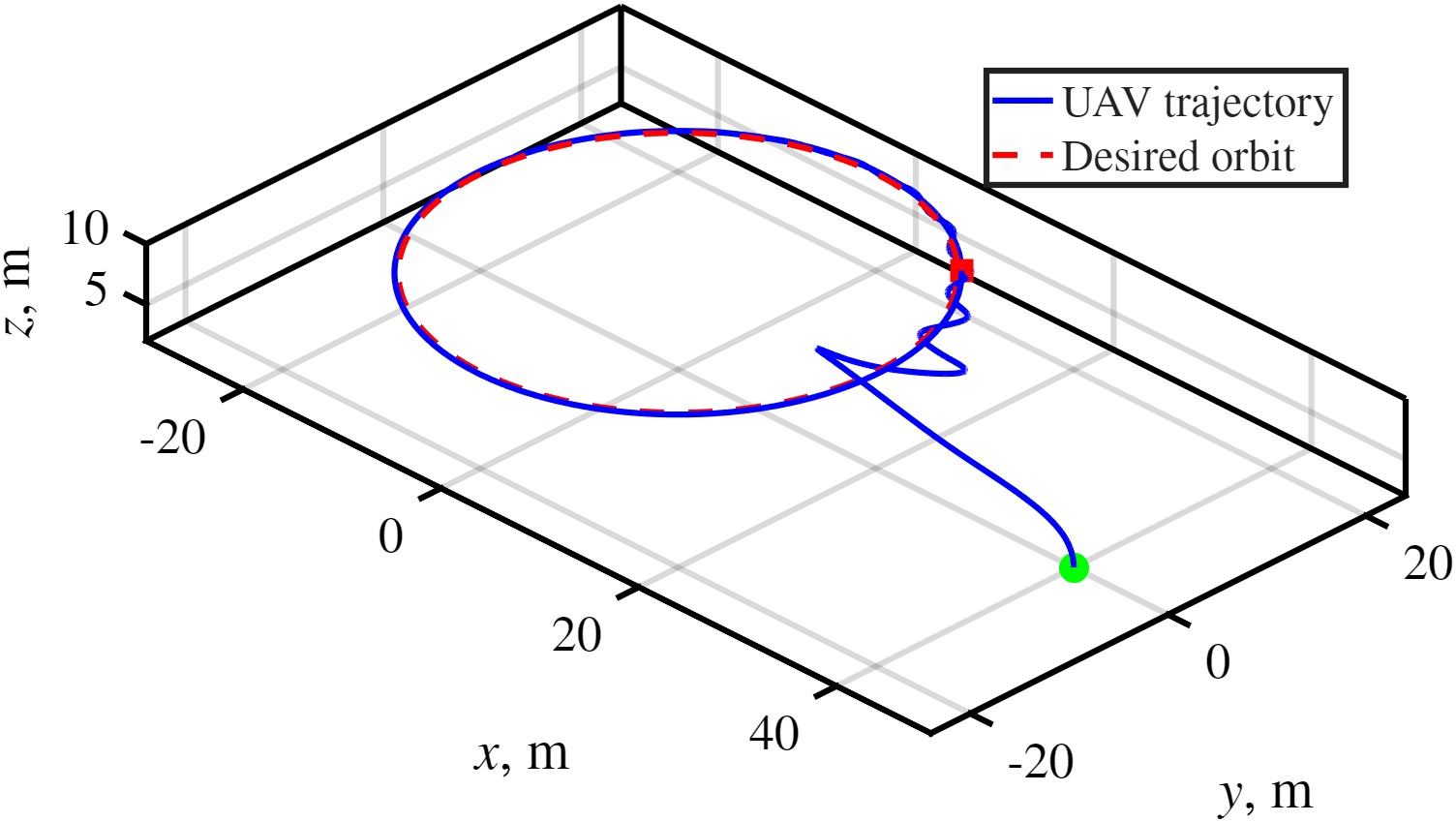}
			\caption{UAV trajectory }
			\label{fig: Trajectories outside nonideal 3D}     
		\end{figure}%

     \begin{figure}[ht]
     \begin{subfigure}[b]{0.5\textwidth}
			\centering			
            \includegraphics[width=0.9\linewidth,keepaspectratio,trim={0 0.25cm 0 1cm},clip]{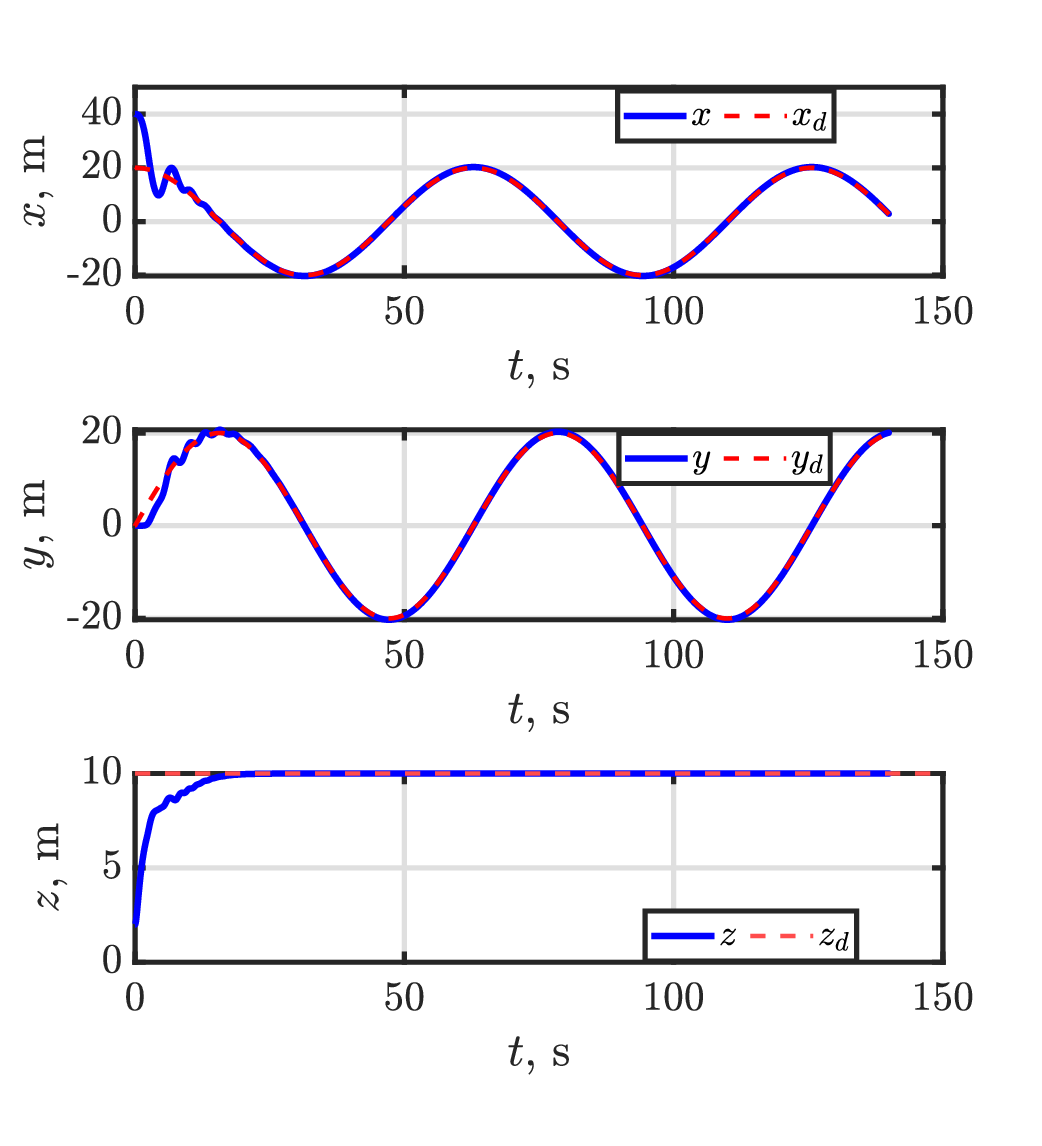}
            \vspace{-0.6cm}
			\caption{Position tracking profiles }
			\label{fig: Position tracking profiles 3D} 
		\end{subfigure}
        \begin{subfigure}[b]{0.5\textwidth}
			\centering	
            \includegraphics[width=0.9\linewidth,keepaspectratio,trim={0 0.25cm 0 1cm},clip]{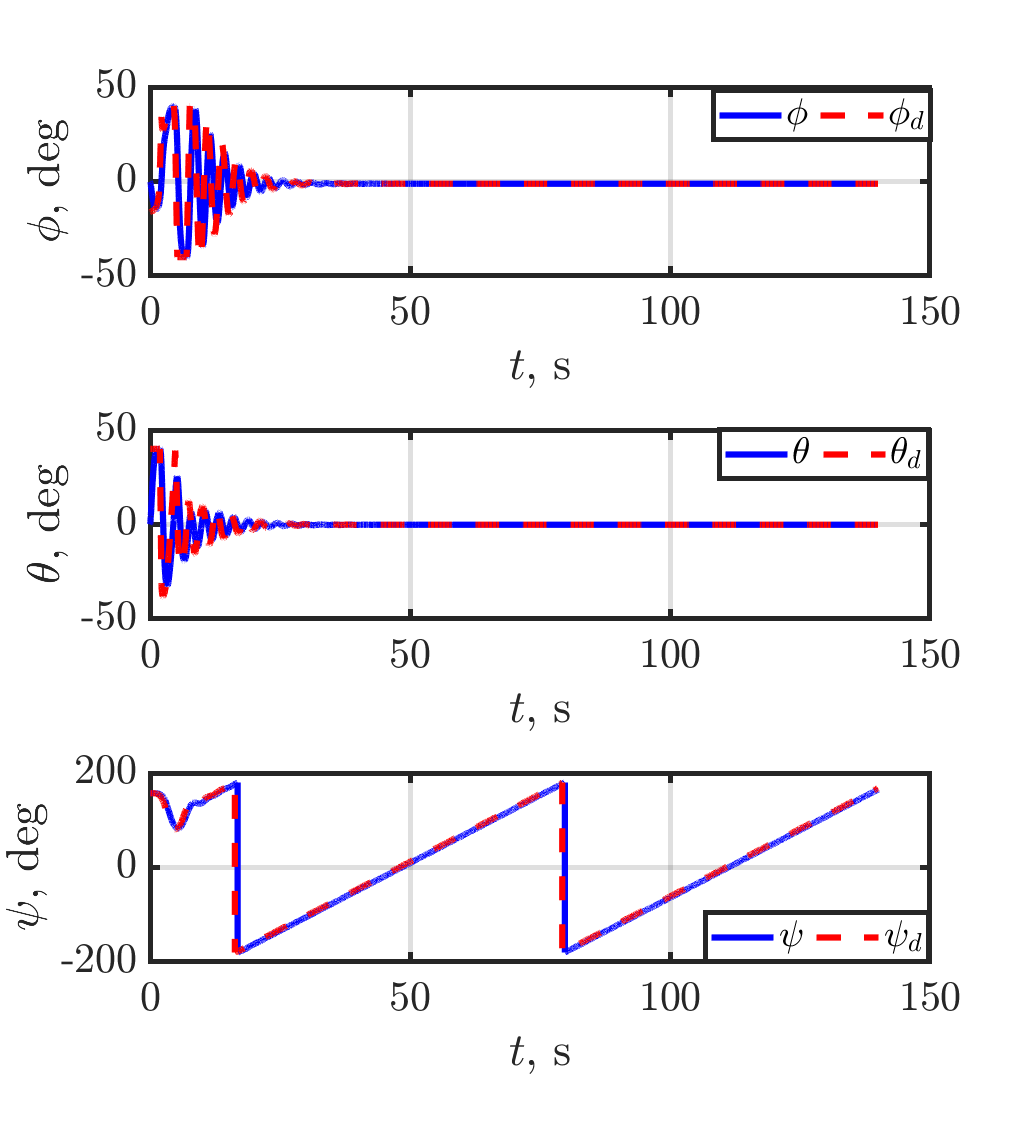}
            \vspace{-0.6cm}
			\caption{Attitude tracking profiles }
			\label{fig: Attitude tracking profiles 3D}  
		\end{subfigure}
      
        \begin{subfigure}[b]{0.25\textwidth}
			\centering			
            \includegraphics[width=0.9\linewidth,keepaspectratio,trim={0 0.25cm 0 0.25cm},clip]{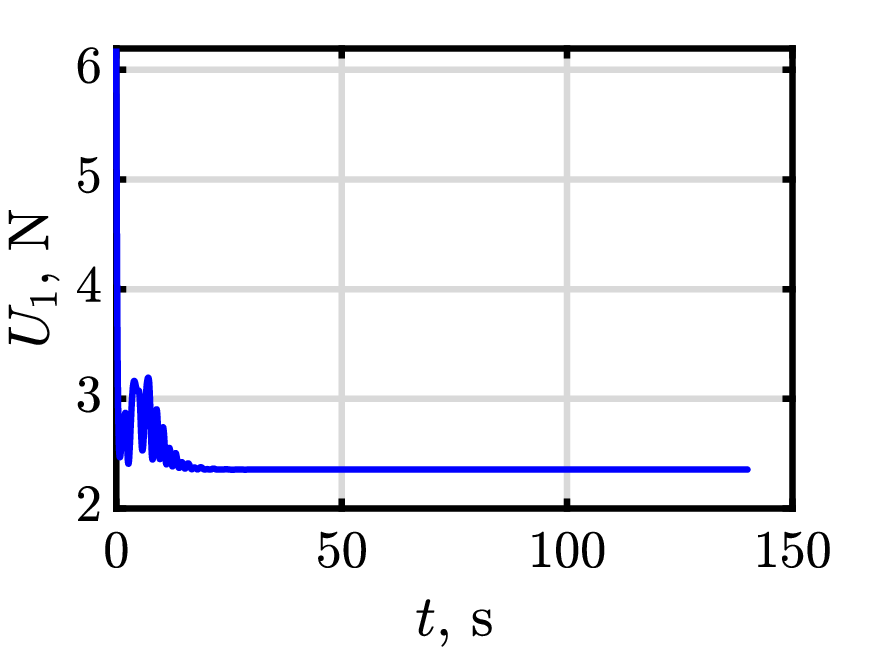}
			\caption{ Thrust profile}
			\label{fig: Thrust profile}     
		\end{subfigure}%
		\begin{subfigure}[b]{0.25\textwidth}
        \centering		\includegraphics[width=0.9\linewidth,trim={0 0.25cm 0 0.25cm},clip]{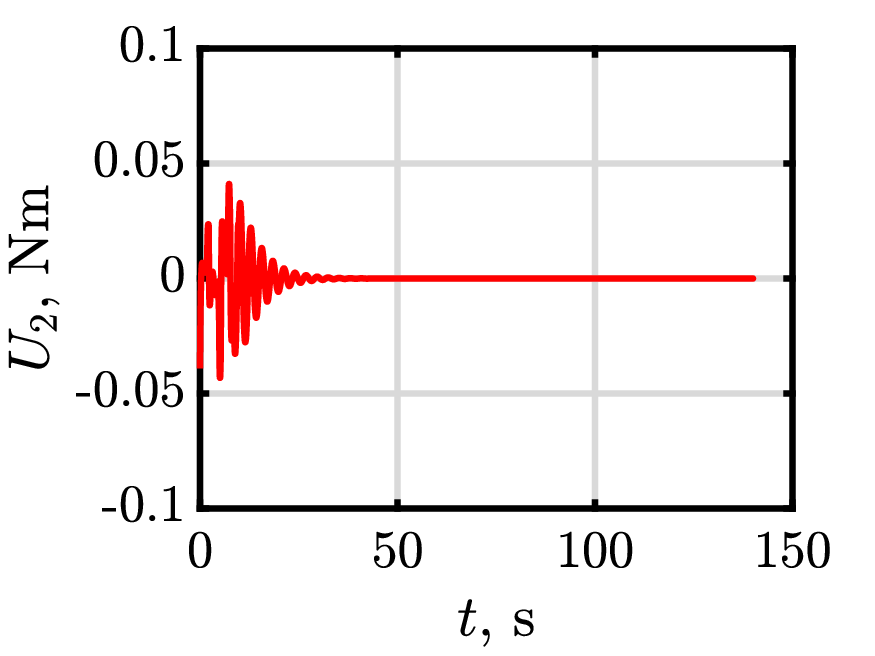}
			\caption{ Roll profile }
			\label{fig: Roll profile}
		\end{subfigure} 
        \qquad
        \begin{subfigure}[b]{.25\textwidth}
        \centering
			\includegraphics[width=0.9\linewidth,trim={0 0.25cm 0 0.25cm},clip]{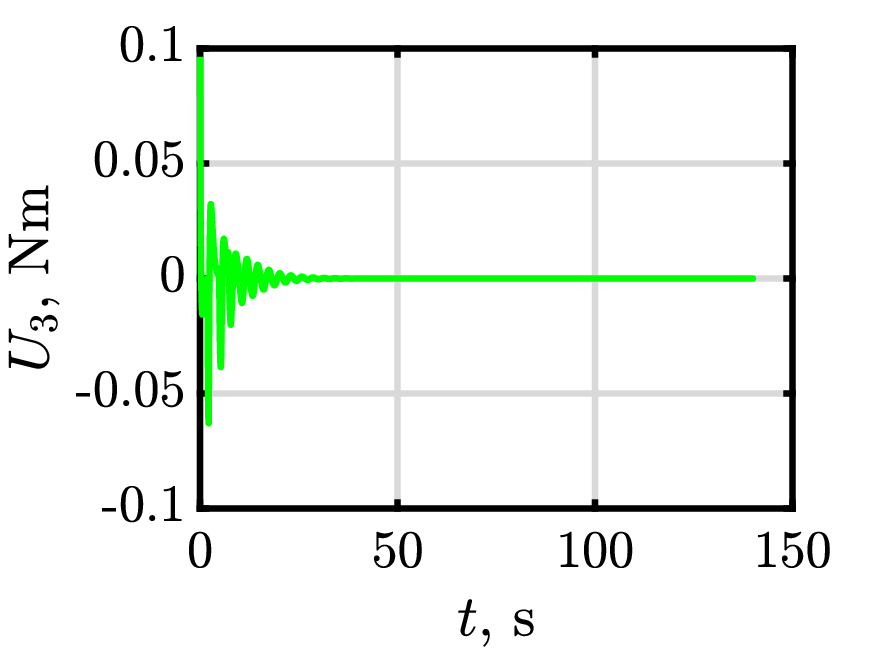}
			\caption{ Pitch profile }
			\label{fig: Pitch profile}
		\end{subfigure}%
        \begin{subfigure}[b]{.25\textwidth}
        \centering		\includegraphics[width=0.9\linewidth,trim={0 0.25cm 0 0.25cm},clip]{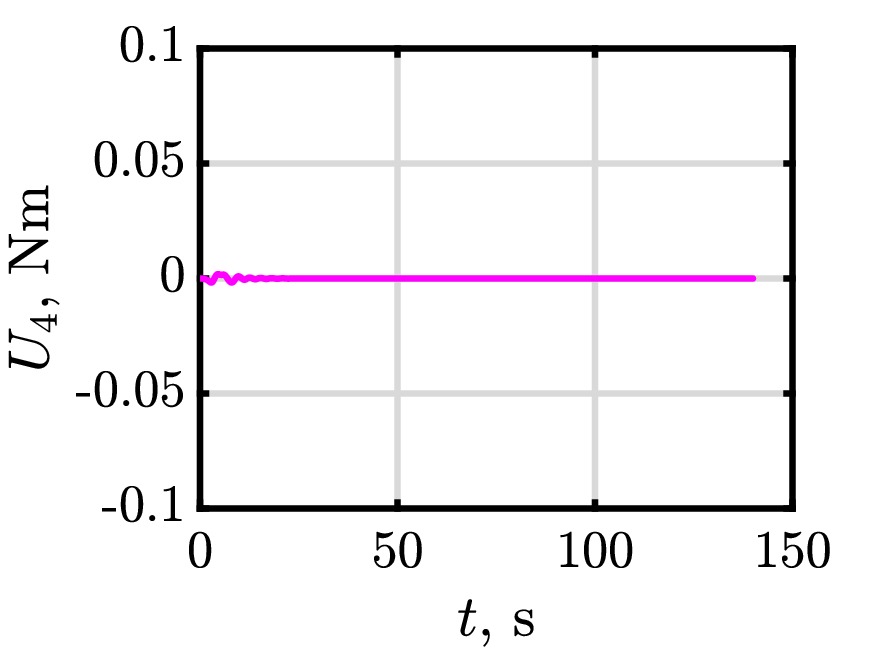}
			\caption{ Yaw profile }
			\label{fig: Yaw profile}
		\end{subfigure}%
        \caption{Results for 6dof dynamic model}
        \label{fig: Results for 6dof dynamic model}
\end{figure}

\section{Conclusions}
This paper presented a geometry-driven look-angle shaping framework for standoff target tracking under turn-rate constraints. In contrast to gain-amplified shaping strategies, where admissible convergence speed is limited by initial-condition-dependent gain bounds, the proposed approach regulates radial contraction through bounded nonlinear evolution of the look angle itself. The engagement dynamics were formulated in polar coordinates, enabling analytical settling-time characterization under radial-dominant conditions. Lyapunov analysis established global asymptotic convergence to the prescribed standoff circle while embedding curvature feasibility directly within the guidance design. Simulation results demonstrated faster far-field capture without excessive turn-rate demand, thereby expanding the admissible operational envelope for practical surveillance and inspection missions.

Future work will address moving targets and three-dimensional extensions, with outdoor experimental tests.


\bibliographystyle{ieeetr}
\bibliography{refICUAS.bib}
\end{document}